\newcolumntype{P}[1]{>{\RaggedRight\hspace{0pt}}p{#1}}
\providecommand{\keywords}[1]{\textbf{\textit{Index terms---}} #1}
\newtheorem{theorem}{Theorem}
\newtheorem{corollary}{Corollary}
\newtheorem{definition}{Definition}
\newtheorem{proposition}{Proposition}
\newcommand{\specialcell}[2][c]{%
	\begin{tabular}[#1]{@{}c@{}}#2\end{tabular}}
\newcommand{\specialcelll}[2][l]{%
	\begin{tabular}[#1]{@{}l@{}}#2\end{tabular}}
\newcommand{\specialcel}[2][c]{%
	\begin{tabular}[#1]{@{}l@{}}#2\end{tabular}}
\newcommand{\CommentX}[1]{\unskip~/*~#1~*/}
\begin{document}
	\textblockrulecolour{blue}
\begin{textblock}{12}(5,2.5)
	 \vspace{0.3cm}
	\large
	\centering
	\noindent
	
\textcolor{blue}{For the final published version in the IEEE Transactions on 
	\newline Neural Networks and Learning Systems, please refer to \newline \url{https://ieeexplore.ieee.org/document/9185001}}
 \vspace{0.3cm}
\end{textblock}

		\title{
		Demystifying Deep Learning  \\ in Predictive Spatio-Temporal Analytics: \\ An Information-Theoretic Framework
	}
	\date{}
	\author{Qi Tan,
		Yang Liu,
		and~Jiming~Liu
		\thanks{This work was supported in part by the General Research Fund from the Research Grant Council of Hong Kong SAR under Projects RGC/HKBU12201318, RGC/HKBU12201619, RGC/HKBU12202220, and RGC/HKBU12202417, and in part by the Faculty Research Grant of Hong Kong Baptist University (HKBU) under Project FRG2/17-18/027.
		Jiming Liu is the corresponding author.  
		Qi Tan, Yang Liu and Jiming Liu are with the Department of Computer Science, Hong Kong Baptist University, Hong Kong SAR, China (e-mail: \{csqtan,csygliu,jiming\}@comp.hkbu.edu.hk).}
	}
	
	\maketitle
	
	\begin{abstract}
		Deep learning has achieved incredible success over the past years, especially in various challenging predictive spatio-temporal analytics (PSTA) tasks, such as disease prediction, climate forecast, and traffic prediction, where intrinsic dependency relationships among data exist and generally manifest at multiple spatio-temporal scales. 	
		However, given a specific PSTA task and the corresponding dataset, how to appropriately determine the desired configuration of a deep learning model, theoretically analyze the model's learning behavior, and quantitatively characterize the model's learning capacity remains a mystery.
		In order to demystify the power of deep learning for PSTA in a theoretically sound and explainable way, in this paper, we provide a comprehensive framework for deep learning model design and information-theoretic analysis.
		First, we develop and demonstrate a novel interactively- and integratively-connected deep recurrent neural network (I$^2$DRNN) model. 
		I$^2$DRNN consists of three modules: 
		an Input module that integrates data from heterogeneous sources;
		a Hidden module that captures the information at different scales while allowing the information to flow interactively between layers; and 
		an Output module that models the integrative effects of information from various hidden layers to generate the output predictions.
		Second, to theoretically prove that our designed model can learn multi-scale spatio-temporal dependency in PSTA tasks, we provide an information-theoretic analysis to examine the information-based learning capacity (i-CAP) of the proposed model.
		In so doing, we can tackle an important open question in deep learning, that is, how to determine the necessary and sufficient configurations of a designed deep learning model with respect to the given learning datasets.
		Third, to validate the I$^2$DRNN model and confirm its i-CAP, we systematically conduct a series of experiments involving both synthetic datasets and real-world PSTA tasks. 
		The experimental results show that the I$^2$DRNN model outperforms both classical and state-of-the-art models on all datasets and PSTA tasks.
		More importantly, as readily validated, the proposed model captures the multi-scale spatio-temporal dependency, which is meaningful in the real-world context. Furthermore, the model configuration that corresponds to the best performance on a given dataset always falls into the range between the necessary and sufficient configurations, as derived from the information-theoretic analysis.
	\end{abstract}
	
	\keywords{
		Demystification of deep learning; information-based learning capacity (i-CAP); necessary and sufficient configurations; predictive spatio-temporal analytics (PSTA); multi-scale spatio-temporal dependency; interactively- and integratively-connected deep recurrent neural network (I$^2$DRNN)
	}
	
	\section{Introduction}
	Deep learning has received remarkable attention over the past years and achieved incredible success in various applications. 
	Recently, studies of deep learning for predictive spatio-temporal analytics (PSTA)
	have become increasingly important as their applications are closely related to human well-being, and the instrumented and interconnected world makes spatio-temporal data more ubiquitous than ever before.
	
	One of the most challenging tasks in PSTA~\cite{eshel2011spatiotemporal} is to learn the intrinsic dependency relationships among data, as in many real-world applications 
	such as disease prediction~\cite{matsubara2014funnel,Yang:2014AAAI}, climate forecast~\cite{grover2015deep,xu2019spatio,8633392}, and traffic prediction~\cite{zhang2017deep,yao2019learning,Wang:2019:LTC:3308558.3313704,qi2020aaai}, 
	such dependency relationships are generally exhibited at multiple spatio-temporal scales among heterogeneous data sources ~\cite{gao2007multiscale,allen2017multiscale,xu2018muscat}.
	Taking infectious disease as an example, the infected case number in a specific region might be on a downward trend each year, but the actual case number for various smaller regions at different time of year may fluctuate with a multitude of factors, such as environmental, geographic, meteorological, and demographic factors, at varying scales~\cite{Yang:2014AAAI}. 
	Those governing factors, which affect the real-world in different ways, complicate the spatio-temporal dependency and make it difficult to capture.
	
	As a powerful nonlinear learner for data representation and information extraction, deep learning has demonstrated its unique ability in capturing complex spatio-temporal dependency among data for making accurate predictions.
	However, there remain certain open questions: \textbf{Given a specific PSTA task and the corresponding dataset, how to appropriately determine the desired configuration of the deep learning model, so that the useful information contained in the data can be effectively extracted; and how to theoretically analyze the model's learning behavior and quantitatively characterize the model's learning capacity, so that the outstanding performance of the model can be well explained}.
	These are the questions that we aim to answer in this paper.

	\subsection{Related Work}

	\vspace{0.0cm}
	\subsubsection{Traditional Learning Models for PSTA}
	
	\
	
	
	
	The earliest spatio-temporal prediction models are based on the classical time series models (e.g., vector autoregressive model~\cite{banerjee2014hierarchical} and Gaussian process regression model~\cite{senanayake2016predicting}). 
	The spatial observations in one time step are treated as a vector, and the spatial dependency can be regarded as the multivariate dependency in the time series models.
	
	Some tensor-based models have been proposed to account for the underlying dependency among spatio-temporal variates from different data sources. 
	Bahadori et al.~\cite{bahadori2014fast} treated the spatio-temporal data as tensors and proposed a low-rank tensor learning framework for spatio-temporal prediction.
	Furthermore, the spatial autocorrelation~\cite{takeuchi2017autoregressive}, temporal autocorrelation~\cite{yu2016temporal}, and high-order temporal correlation~\cite{jing2018high} have been modelled as constraints and integrated into the tensor factorization frameworks for the PSTA task.

	\vspace{0.2cm}
	\subsubsection{Deep Learning for PSTA}
	
	\
	
	Various deep learning models have recently been proposed to capture the spatio-temporal dependency and predict spatio-temporal series. On the one hand, some studies treated the spatio-temporal prediction as a regression problem.
	Zhang et al.~\cite{zhang2017deep,zhang2018predicting} proposed a spatio-temporal residual network (ST-ResNet) that integrates the temporal closeness, period and trend properties of the target data and some external features for regression. 
	Yao et al.~\cite{Yao2018AAAI} considered spatial and temporal information as different views and proposed a deep multi-view spatio-temporal network framework to predict taxi demand.
	
	Recurrent neural network (RNN)-based models have also been widely adopted to model spatio-temporal datas.
	Ziat et al.~\cite{ziat2017spatio} proposed a neural hidden state model in which the transition between states is modeled using a neural network. The attention mechanism has recently been considered in capturing the context-dependent dependency~\cite{qin2017dual,liang2018geoman}. RNNs can also be regarded as a hidden state model with rich representation capacity~\cite{karpathy2015visualizing,collins2016capacity,Quan2019Recurrent,Kerg2019non}.

	\vspace{0.2cm}
	\subsubsection{Theoretical Understanding on Deep Learning}
	
	\
	
	In order to explore the learning behavior of deep learning, some studies attempted to understand the deep models from the theoretical perspective.
	Tishby and Zaslavsky investigated the deep neural networks based on the information bottleneck (IB) principle~\cite{tishby2015deep}. 
	Following the IB principle, Goldfeld et al. described the mutual information between the input of a deep neural network and the output vector of its hidden layer~\cite{goldfeld19a}.
	In addition to investigating the deep neural networks via information, researchers also studied deep learning from other perspectives, e.g., physics and geometry.

	\subsection{Motivation}
	Although existing models aim to tackle the PSTA task, they have not explicitly addressed the issue of multi-scale dependency modeling and hence have presented certain limitations in their performance.
	The classical time series models, tensor-based models, and regression-based deep neural networks are mainly derived from the autoregressive model, which emphasizes the short-range temporal dependency but ignores the long-term dependency to some extent. 
	RNNs are promising in processing sequential data, and in particular, stacked RNNs can learn a hierarchical representation of information. However, stacked RNNs do not allow the feedback information to flow from the top layers to the bottom layers, nor is there a direct connection between the output layer and the non-top layers, which weakens the memorization of information at different scales and the effect of information from the bottom/intermediate layers in generating predictions.
	
	More importantly, even though the existing deep learning models have achieved good performance in some given PSTA tasks, and earlier studies have been conducted to examine the learning behaviors of deep neural networks, it still lacks an in-depth understanding on why a specific deep learning model works well on a given dataset and what is the relationship between the configuration of a deep learning model and its corresponding learning capacity on the given task and dataset.
	Without such a clear understanding, it remains a mystery how to determine the desired configurations of a certain deep model with respect to the given learning datasets, and thus making it difficult to sustain the success of deep learning.
	
	To address the above unsolved yet challenging issues in a theoretically sound and explainable way, it is of great importance to answer three questions: 
	\begin{enumerate}
		\item How can we design a learning model to characterize the complex multi-scale dependency of spatio-temporal data?
		\item How can we quantify and analyze the designed model's capacity in capturing the multi-scale dependency of spatio-temporal data?
		\item How can we validate the learning behavior and performance of the designed model in various scenarios of multi-scale spatio-temporal dependency?
	\end{enumerate}

	\subsection{Our Contributions}
	This paper is aimed to specifically tackle the challenge of demystifying deep learning in PSTA by answering the three questions above. 
	The contributions of the paper can be highlighted as follows: 
	\begin{enumerate}
	
		\item 	We propose  a novel interactively- and integratively-connected deep recurrent neural network (I$^2$DRNN) model to answer the first question. 
		I$^2$DRNN  is composed of three important modules: an Input module to integrate heterogeneous data, a Hidden module to allow information interaction between layers, and an Output module to integrate the information from varying scales to make predictions. With the integration of these modules, I$^2$DRNN can model the integrative effects of varying scales of spatio-temporal data, within and/or across diverse factors, as observed from heterogeneous sources. The designed I$^2$DRNN model provides the basis for our subsequent information-theoretic analysis.
		
		\item We develop an information-theoretic framework to answer the second question. This framework enables us to theoretically analyze I$^2$DRNN's learning behavior, to quantitatively characterize the information-based learning capacity (i-CAP) of each component of I$^2$DRNN in terms of capturing the multi-scale spatio-temporal information, and to appropriately determine the necessary and sufficient configurations for I$^2$DRNN with respect to a given dataset. With the information-theoretic guarantees, the developed framework serves as a rigorous and explainable guidance for designing a desirable deep architecture for a given learning task.
	
		\item We systematically design a series of experiments on both synthetic datasets with multi-scale dependency and real-world PSTA tasks with heterogeneous data sources to answer the third question. I$^2$DRNN achieves better performance than existing models on all datasets and PSTA tasks. The multi-scale spatio-temporal dependency captured by I$^2$DRNN can be interpreted in a real-world context. Moreover, on all datasets and PSTA tasks, the model configuration that achieves the best performance always falls within the interval between the necessary configuration and the sufficient configuration, which is consistent with our information-theoretic analysis.
	\end{enumerate}

	\subsection{Organization of the Paper}
	The rest of the paper is organized as follows. 
	Section \ref{SEC:HIER} elaborates the details of our model design, including the problem definition, preliminaries in RNN, and the proposed
	I$^2$DRNN model.
	Section \ref{SEC:ITRNN} presents the information-theoretic framework to analyze the i-CAP of our model and to determine the appropriate model configurations with respect to a given dataset.
	Section \ref{SEC:SYE} provides extensive experimental results on both synthetic datasets and real-world PSTA tasks to validate the effectiveness of the proposed I$^2$DRNN model and to confirm its i-CAP derived from the information-theoretic analysis.
	Section \ref{SEC:CON} concludes this paper.

	\section{Model Design}
	\label{SEC:HIER}
	In this section, we provide the details of the deep model design, serving as the basis of our information-theoretic analysis.
	First, we formally define the problem of multi-scale spatio-temporal dependency learning in PSTA, including the variables and formulations. 
	We then present some necessary preliminaries in RNN, which are the foundation of the proposed model.
	After that, we propose our I$^2$DRNN model, explain its architecture as well as the role of each designed module in capturing the multi-scale dependency among spatio-temporal data, and present the learning procedure.
	
	\subsection{Problem Definition}
	\label{SEC:PF}
	Let $\mathbf{Y}\in \Re^{N \times T}$ be the target variable collected in $N$ locations during $T$ time steps and $\mathbf{y}_t \in \Re^{N}$ be the target variable at time step $t$ ($t = 1, \cdots, T$). 
	Let $\mathbf{X}^{(i)} \in \Re^{N^{(i)} \times T^{(i)}}$ ($i=1,2,\cdots,d$) be the covariate observed from the $i$-th data source, where $d$ denotes the number of data sources. Note that the spatial and temporal resolutions of these data are not necessarily well aligned. 
	The target of multi-scale spatio-temporal dependency learning in PSTA is to learn such a function $\mathbf{f}(\cdot)$ that reflects the complex intrinsic relationships among the target variable and multiple covariates, so that $\mathbf{y}_t$ can be well predicted with the learned function $\mathbf{f}(\cdot)$ and the input of historical observations on $\mathbf{Y}$ and observed covariates: 
	\begin{equation}
	\small
	\mathbf{y}_t = \mathbf{f}(\mathbf{Y}_{[1:t-1]}, q^{(i)}(\mathbf{X}^{(i)}_{[1:t]})|_{i=1}^{d}),
	\label{EQU:PF}
	\end{equation}
	where $\mathbf{Y}_{[1:t-1]}$ denotes the target variable collected from time steps $1$ to $t-1$, 
	$\mathbf{X}^{(i)}_{[1:t]}$ denotes the covariate observed from the $i$-th data source up to time step $t$, and $q^{(i)}(\mathbf{X}^{(i)}_{[1:t]})$ indicates the effect of $i$-th covariate on the target variable. 
	The notations used in this paper are described and explained in Table~\ref{TAB:Notation}.
	
	\begin{table}[!t]
		\renewcommand{\arraystretch}{1.3}
		\vspace{0.00cm}
		\caption{Notations and descriptions.}	
		\label{TAB:Notation}
		\centering	
		\begin{tabular}{|l|p{6.5cm}|}
			\hline
			
			\hline
			
			\hline
			Notations & Descriptions \\
			\hline 
			
			\hline 
			
			\hline 
			$\mathbf{Y}$ & Target variable \\ \hline
			$\mathbf{X}$ & Input covariates\\ \hline
			$\mathbf{Y}_{[1:t-1]}$ & Target variable from time step $1$ to time step $t-1$\\ \hline
			$\mathbf{X}_{[1:t-1]}$ & Input covariates from time step $1$ to time step $t-1$\\ \hline
			$\mathbf{y}_t$ & Target variable in time $t$\\ \hline
			$\mathbf{x}_t$ & Input covariates in time $t$\\ 
			\hline 
			
			\hline 
			
			\hline 
			$\mathbf{h}_t^l $ & Hidden state of the $l$-th level at time step $t$, $l\in [1, L]$ \\ \hline
			$\mathbf{o}_t$ & Output value of RNN models at time step $t$\\ \hline
			$\mathbf{W}^l$ & State-to-state transition matrix of $l$-th layer in stacked RNN\\ \hline
			$\mathbf{U}^l$ & Input-to-state weight matrix of $l$-th layer in stacked RNN \\ \hline
			$\mathbf{V}$ & State-to-output weight matrix \\		 \hline
			$\mathbf{W}^{i \rightarrow j}$ & State-to-state transition matrix from $i$-th layer in time step $t-1$ to $j$-th layer at time step $t$ in I$^2$DRNN\\ \hline
			$\mathbf{V}^{l \rightarrow O} $ &State-to-output weight matrix of $l$-th layer in I$^2$DRNN \\ \hline
			$\mathbf{X}^F_t $ & Fine-scale features at time step $t$ \\ \hline
			$\mathbf{x}^C_t$ &Coarse-scale features at time step $t$\\ \hline
			$\mathbf{x}^S_t$ &The same scale features at time step $t$\\ \hline
			$\mathbf{c}_t$ & Context features\\
			\hline 
			
			\hline 
			
			\hline 
			$I(\mathbf{y};\mathbf{x})$ & Mutual information (MI) between two variables $\mathbf{y}$ and $\mathbf{x}$\\ \hline
			$D^{R}(\tau)$ & Recurrent Information Rate with time tag $\tau$ \\ \hline
			$\lambda$ & Recurrent coefficient\\ \hline
			$D^{X}$ & Input Information Rate \\ \hline
			$\eta$ & Input coefficient\\ \hline
			$I(\mathbf{y}_t; \mathbf{o}_t)$ & i-CAP of a model for a given dataset\\
			\hline 
			
			\hline 
			
			\hline 		
		\end{tabular}		
	\end{table}

	\begin{figure*}[htp]
		\centering
		\subfigure{\centerline{	\includegraphics[width=1.3\linewidth]{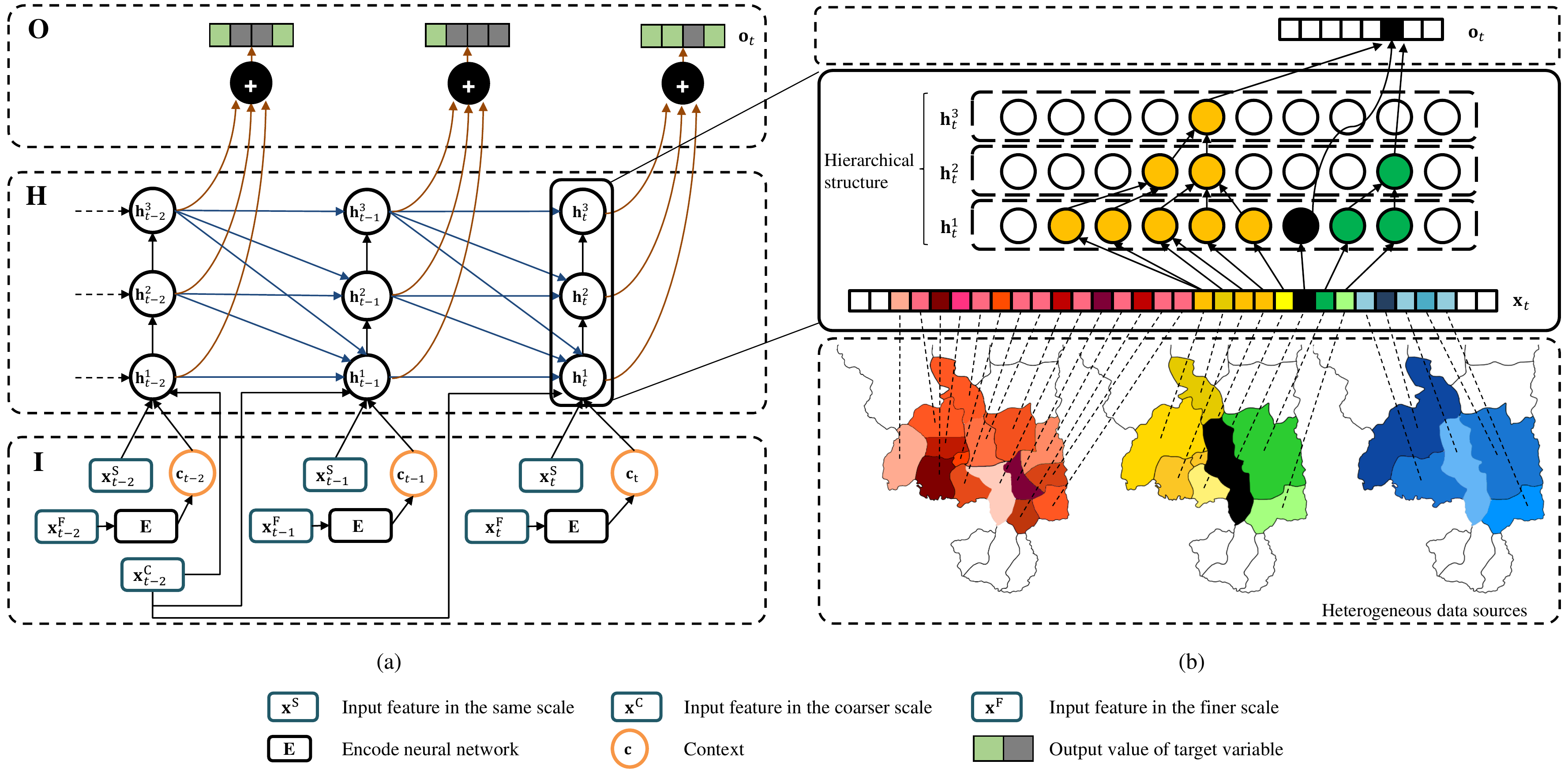}}}
		\caption{The proposed {\bf I}nteractively- and {\bf I}ntegratively-connected {\bf D}eep {\bf R}ecurrent {\bf N}eural {\bf N}etwork (I$^2$DRNN) model. 
			(a) I$^2$DRNN is composed of the Input (I) module, the Hidden (H) module, and the Output (O) module. 
			The encoder and decoder structures in I module integrate data from heterogeneous sources.
			The hierarchical structure in H module is used to capture multiple spatio-temporal effects on target variable caused by covariates from different data sources by allowing interaction of information among various layers.
			The integrative effects at varying scales are then modeled in O module to generate the output predictions.				
			(b) $\mathbf{x}_t$ is a vector that represents the data from multiple heterogeneous sources (denoted by different maps) in all different locations at time step $t$ and $\mathbf{o}_t$ is an $N$-dimensional vector representing the predicted values of target variable in $N$ locations at time $t$. By extracting the information from $\mathbf{x}_t$ using the hidden layers, i.e, $\mathbf{h}_t^1$, $\mathbf{h}_t^2$, and $\mathbf{h}_t^3$, the spatial dependency of various locations can be captured. 
			Specifically, the target variable in one location can be influenced by the effects from individual locations (e.g., the black node in $\mathbf{h}_t^1$) or the collective effects from different locations at multiple scales (e.g., the green node in $\mathbf{h}_t^2$ and the yellow node in $\mathbf{h}_t^3$). The hierarchical structure can learn such multi-scale spatial dependency. Note that the hierarchical layers are fully connected (i.e., $\mathbf{x}_t \rightarrow \mathbf{h}_t^1$, $\mathbf{h}_t^1 \rightarrow \mathbf{h}_t^2$, $\mathbf{h}_t^2 \rightarrow \mathbf{h}_t^3$, $\mathbf{h}_t^1 \rightarrow \mathbf{o}_t$, $\mathbf{h}_t^2 \rightarrow \mathbf{o}_t$ and $\mathbf{h}_t^3 \rightarrow \mathbf{o}_t$). Some of the connections are highlighted simply for illustration.}
		\vspace{-0.2cm}
		\label{Figure:Framework}
	\end{figure*}

	\subsection{Preliminaries}
	\label{SEC:PRE}
	
	RNN is a typical neural network model that has been widely used in sequential prediction~\cite{Mohajerin2019Multistep}.
	By forming a directed cycle between hidden units, the historical information of the input sequences is well preserved in RNN.
	The state of the hidden unit of a conventional RNN at time step $t$ is computed as a
	function of the current input $\mathbf{x}_t$ and the previous
	hidden state $\mathbf{h}_{t-1}$:
	\begin{equation}
	\small
	\mathbf{h}_t = f_h(\mathbf{h}_{t-1}, \mathbf{x}_t).
	\end{equation}
	It is common to adopt the element-wise nonlinear activation function as the transition function: 
	\begin{equation}
	\small
	\begin{split}
	\mathbf{h}_t & = \sigma_h(\mathbf{W} \mathbf{h}_{t-1} + \mathbf{U} \mathbf{x}_t),\\
	\mathbf{o}_t & = \sigma_o(\mathbf{V} \mathbf{h}_t ),
	\end{split}
	\end{equation}
	where $\mathbf{W}$ is the state-to-state transition matrix, $\mathbf{U}$ is the input-to-state weight matrix, $\mathbf{V}$ is the state-to-output weight matrix, $\sigma_h(\cdot), \sigma_o(\cdot)$ are the element-wise activation functions, and $\mathbf{o}_t$ is the output value of the RNN model.
	
	In practice, it is difficult for a single-layer RNN to represent the complex distribution, whilst the deep structure is desirable to capture more information via multiple hidden states. The stacked RNN, an RNN model with a deep structure, organizes the multiple hidden states in a hierarchical manner:
	\begin{equation}
	\small
	\mathbf{h}_t^l = f_h^l( \mathbf{h}_t^{l-1}, \mathbf{h}_{t-1}^l) = \sigma_h(\mathbf{W}^l \mathbf{h}_t^{l-1} + \mathbf{U}^l \mathbf{h}_{t-1}^l),
	\label{EQU:stackedstructure}
	\end{equation}
	where $\mathbf{h}_t^l $ is the hidden state of the $l$-th level at time $t$, $l\in [1, L]$. 
	When $l = 1$, the state is computed using the input $\mathbf{x}_t$.
	The hidden states of all levels are recursively computed from the bottom level $l = 1$.
	This architecture can perform hierarchical processing of the temporal data and capture the structure of time series.
	Empirical evaluations have demonstrated the effectiveness of the deep structure in RNN~\cite{graves2013speech}.

	\subsection{The I$^2$DRNN Model}
	Fig.~\ref{Figure:Framework}(a) illustrates the architecture of the proposed I$^2$DRNN, which is composed of three key modules: the Input (I) module that integrates heterogeneous data sources via the encoder and decoder structures, the Hidden (H) module that captures the information on various scales and allows information interaction among layers via the hierarchical structure, and the Output (O) module that integrates effects at varying scales to generate the output predictions.	
	
	\vspace{0.2cm}
	\subsubsection{Input Module: Integration of Heterogeneous Data}
	
	\
	
	To handle the heterogeneity of multi-source data, we adaptively incorporate data from various sources into the I module of I$^2$DRNN. 
	Heterogeneous data sources describe multiple dynamic processes in various scales. 
	We consider three kinds of scales: the coarse-scale, the same-scale, and the fine-scale. Here the coarse-, same-, and fine-scales are determined with respect to the temporal scale of the target variable.
	The coarse-scale process serves as the context/condition of the fine-scale process, while the dynamics of the fine-scale process reflect the state of the coarse-scale process. 
	In other words, when predicting the fine-scale process, coarse-scale data are fed in as the context variable, and when predicting the coarse-scale process, fine-scale data are used to construct the state of the coarse-scale process.
	
	The bottom of Fig.~\ref{Figure:Framework}(a) shows the encoder and decoder structures designed in the I module of I$^2$DRNN to process the data of various scales. 
	The encoder transforms a sequence into a vector representation, while the decoder generates a sequence output, given a vector representation as the input~\cite{sutskever2014sequence}. 
	In our model, we use the encoder structure for fine-to-coarse inference:
	\begin{equation}
	\small
	\mathbf{c}_t = f_E(\mathbf{X}^F_t),
	\end{equation} 
	where $\mathbf{X}^F_t \in \Re^{N^f \times T^f}$ are the fine-scale features and $f_E$ is an encoder RNN. Note that the fine-scale processes progress multiple time steps during the coarse time interval $t$.
	The coarse features are fed in repeatedly to predict the target variable. 
	We use $\mathbf{x}^C_t$ to denote the coarse-scale features.
	
	Moreover, the heterogeneous data may not be grid-distributed nor well aligned in a unified spatial resolution. The bottom of Fig.~\ref{Figure:Framework}(b) gives an example of spatial data from various resources (denoted by different maps) that are not well aligned. 
	We concatenate all heterogeneous spatial data that in the same temporal scale to form the input vectors in the coarse-scale (denoted as $\mathbf{x}^C_t$), same-scale (denoted as $\mathbf{x}^S_t$), or fine-scale (denoted as $\mathbf{x}^F_t$), respectively.
	
	Combining the feature representations from heterogeneous data sources, the input layer is constructed as follows:
	\begin{equation}
	\small
	\mathbf{x}_t = concatenate(\mathbf{c}_t, \mathbf{x}^C_t, \mathbf{x}_t^S),
	\label{EQU:xt}
	\end{equation}
	where $concatenate(\mathbf{c}_t, \mathbf{x}^C_t, \mathbf{x}_t^S)$ denotes the operation of concatenating $\mathbf{c}_t$, $\mathbf{x}^C_t$, and $\mathbf{x}_t^S$ into a single vector $\mathbf{x}_t$.

	\vspace{0.2cm}
	\subsubsection{Hidden Module: Hierarchical Structure for Multi-scale Information Interaction}
	
	\
	
	To learn the multi-scale spatio-temporal dependency, the H module should be able to model the interaction and integration of information from various layers.
	
	With information flowing over multiple time scales, temporal data often display hierarchical properties. The fast-moving component dominates the fine-scale dynamics, whilst the slow-moving component plays an important role in the coarse-scale dynamics.
	The observation is the integrative effects of information over multiple scales. 
	As demonstrated in some representative works~\cite{el1996hierarchical,koutnik2014clockwork}, the multi-scale representation of context can be learned by the hierarchical RNN with each layer working on a certain time scale. 
	
	The middle of Fig.~\ref{Figure:Framework}(a) shows the design of the H module in I$^2$DRNN that provides the feedback connections in each layer to adaptively capture the multi-scale information in the spatio-temporal dynamic processes.
	The hidden state of the $j$-th layer in Eq.~(\ref{EQU:stackedstructure}) is computed as follows:
	\begin{equation}
	\small
	\mathbf{h}^j_t = \sigma_h(\mathbf{U}^{j-1 \rightarrow j} \mathbf{h}_t^{j-1}+ \sum_{i=j}^{L} \mathbf{W}^{i \rightarrow j}\mathbf{h}^i_{t-1}), \label{EQU:hjt}
	\end{equation}
	where the superscript ${i \rightarrow j}$ indicates the gate from layer $i$ in time step $t-1$ to layer $j$ in time step $t$, and we have $\mathbf{h}^0_t = \mathbf{x}_t$.
	Organizing the RNN at multiple time scales helps to capture the long-term dependency by allowing information to flow a long distance more easily at the coarser time scales~\cite{goodfellow2016deep}. 
	Moreover, the feedback connections can decrease the variance of the current input to preserve the historical information.
	
	For the multi-scale spatial dependency, we use a fully connected hierarchical structure in I$^2$DRNN to characterize it.
	
	\vspace{0.2cm}
	\subsubsection{Output Module: Integrative Effects at Multiple Scales}
	
	\
	
	The spatio-temporal observation is the combined effect of the dynamics at various scales. 
	For instance, the target variable in one location (shown in black in layer $\mathbf{o}_t$ in Fig.~\ref{Figure:Framework}(b)) can be influenced by the effects from individual locations (e.g., the black node in the layer $\mathbf{h}_t^1$ in Fig.~\ref{Figure:Framework}(b)) and by the collective effects from various locations on multiple scales (e.g., the green node in layer $\mathbf{h}_t^2$ and the yellow node in layer $\mathbf{h}_t^3$ in Fig.~\ref{Figure:Framework}(b)). 
	To model such a phenomenon, the output of our model in time step $t$ should be from the integrative effects of information at all scales. 
	The top of Fig.~\ref{Figure:Framework}(a) shows the design of the O module in I$^2$DRNN that provides a connection from each hidden layer to the output layer. 
	As a result, the output at time step $t$ is computed as follows:
	\begin{equation}
	\small
	\label{EQU:OUTPUT}
	\mathbf{{o}}_t= \sum_{l= 1}^{L} \mathbf{V}^{l \rightarrow O} \mathbf{h}_t^l.
	\end{equation}
	
	\vspace{0.2cm}
	\subsubsection{The Learning Procedure}
	\label{SEC:LEARN}
	
	\
	
	Finally, we introduce the learning procedure of the proposed model.
	As described in Algorithm~\ref{ALG:ITERATIVEINF}, we
	first construct the training dataset and the validation dataset from the original data sources. The proposed model is trained to predict the target variable $\mathbf{y}_t$. Therefore, the objective is to determine the parameters of the proposed model that can minimize the difference between the predicted values and the ground truth:
	\begin{equation}
	\small
	\mathop{\arg\min}\limits_{\substack{\Theta}} \text{ } Loss(\mathbf{Y}, \Theta) = \sum_{\mathbf{y}_t \in \mathbf{Y}}||\mathbf{y}_t - \mathbf{o}_t ||^2_2,
	\end{equation}
	where $\Theta$ is the parameter set of the model, and $\mathbf{o}_t$ is the output of I$^2$DRNN. The model is trained via Adam \cite{ADAM}. 
	
	\begin{algorithm}[!t]\small

		\SetKwInOut{Input}{Input}
		\SetKwInOut{Output}{Output}
		
		\KwIn{Historical observations: $\mathbf{Y}$;
			heterogeneous data sources: $\mathbf{X}^S, \mathbf{X}^F, \mathbf{X}^C$;}
		\KwOut{Learned parameters of I$^2$DRNN: $\Theta$}	
		\vspace{0.2cm}
		$\mathbf{X} \leftarrow  [ \mathbf{X}^S, \mathbf{X}^F, \mathbf{X}^C] $; 
		
		$\mathbf{\mathcal{D}} \leftarrow  <\mathbf{X}, \mathbf{Y}>$; 
		
		$\{\mathbf{\mathcal{D}}^t, \mathbf{\mathcal{D}}^v\} \leftarrow \mathbf{\mathcal{D}} $;

		$\mathbf{v}^l \leftarrow \emptyset$; 
		
		\While{stopping criteria is not satisfied}  
		{
			\For{ $t = 1 : T$ }{
				\CommentX{Compute Model Prediction}
				
				$\mathbf{c}_t \leftarrow f_E(\mathbf{X}^F_t)$
				
				$\mathbf{x}_t \leftarrow \text{concatenate}(\mathbf{c}_t, \mathbf{x}^C_t, \mathbf{x}^S_t, \mathbf{x}^{sp}_t)$;
				
				\For{$j = 1 : L$}{
					$\mathbf{h}^j_t \leftarrow \sigma_h(\mathbf{U}^{j-1 \rightarrow j} \mathbf{h}_t^{j-1} + \displaystyle{\sum_{i=j}^{L}} \mathbf{W}^{i \rightarrow j} \mathbf{h}^i_{t-1})$;  	
				}
				
				$\mathbf{{o}}_t \leftarrow \sum_{l= 1}^{L} \mathbf{V}^{l \rightarrow O} \mathbf{h}_t^l$; 
			}
			
			\CommentX{Update Parameters}
			
			$Loss(\mathcal{D}^t, \Theta) \leftarrow \sum_{\mathbf{y}_t \in \mathcal{D}^t}||\mathbf{y}_t - \mathbf{o}_t ||^2_2$;
			
			$\Theta \leftarrow \Theta - \eta \frac{\partial Loss(\mathbf{\mathcal{D}}^t, \Theta)}{\partial\Theta}$
			
			\CommentX{Compute Validation Performance}
			
			$\mathbf{v}^l \leftarrow [\mathbf{v}^l, Loss(\mathbf{\mathcal{D}}^v, \Theta)]$ 
		}
		
		\caption{\small {\bf I}nteractively- and {\bf I}ntegratively-connected {\bf D}eep {\bf R}ecurrent {\bf N}eural {\bf N}etwork (I$^2$DRNN) Learning}
		\label{ALG:ITERATIVEINF}
		
	\end{algorithm}

	\section{Information-Theoretic Analysis}
	\label{SEC:ITRNN}
	In this section, we develop an information-theoretic framework to examine the proposed model's learning behavior. 
	First, we explore the problem of spatio-temporal dependency learning from the perspective of information theory, and we then deduce the learning capacity of RNN. 
	On that basis, we analyze the i-CAP of the proposed model.
	Finally, we determine the necessary and sufficient configurations for the proposed model with respect to the given datasets.

	\subsection{Information-Theoretic Perspective on Spatio-Temporal Dependency Learning}
	We characterize the multi-scale dependency of spatio-temporal data using the concept of mutual information (MI)~\cite{cover2012elements}, which measures the degree of correlation between two random variables. 
	The larger the MI, the more uncertainty in one variable that can be eliminated when given the information for the other variable.

		To fully capture the spatio-temporal dependency, learning models should extract sufficient information from $\mathbf{X}_{[1:t]}$ into the representation $\mathbf{h}_t$ and then use $\mathbf{h}_t$ to infer $\mathbf{o}_t$. Therefore, we expect that the learned representation is informative with respect to predicting $\mathbf{y}_t$, i.e., maximizing $I(\mathbf{y}_t;\mathbf{o}_t)$. 
		In fact, the information that one can obtain about $\mathbf{y}_t$ through encoding $\mathbf{X}_{[1:t]}$ is upper bounded by $I(\mathbf{y}_t; \mathbf{X}_{[1:t]})$.
		According to the data processing inequality \cite{Beaudry:2012:IPD}, if three random variables form the Markov chain $A\rightarrow B \rightarrow C$, i.e., the conditional distribution of random variable $C$ depends only on $B$ and is conditionally independent of $A$ given $B$, then we have $I(A;B) \geq I(A;C)$. 
		In the case of RNN modeling, since $\mathbf{o}_t$ is generated from $\mathbf{X}_{[1:t]}$: $\mathbf{o}_t=RNN(\mathbf{X}_{[1:t]})$, $\mathbf{y}_t$ is independent of $\mathbf{o}_t$ given $\mathbf{X}_{[1:t]}$ and there is a Markov chain $\mathbf{y}_t \rightarrow \mathbf{X}_{[1:t]} \rightarrow \mathbf{o}_t$~\cite{tishby2015deep}. Therefore, we have 
		\begin{equation}\label{DataProcessingInequality}\small
		I(\mathbf{y}_t; \mathbf{X}_{[1:t]}) \geq I(\mathbf{y}_t;\mathbf{o}_t),
		\end{equation}
		i.e., $\exists \alpha \in [0,1]$ such that $I(\mathbf{y}_t;\mathbf{o}_t) = \alpha I(\mathbf{y}_t; \mathbf{X}_{[1:t]})$.
Accordingly, we can define the model's learning capacity $I(\mathbf{y}_t; \mathbf{o}_t)$: 
	
		\begin{definition}
			The {\bf \textit{information-based learning capacity (i-CAP)}}, $I(\mathbf{y}_t; \mathbf{o}_t)$, of a model $\mathcal{M}$ for a specific dataset can be defined as the proportion $\alpha \; ( 0 \leq \alpha \leq 1) $ of the information extracted by model from $X_{[1:t]}$ about $\mathbf{y}_t$:
			\begin{equation}
			\small
			\begin{split}
			& I(\mathbf{y}_t; \mathbf{o}_t) = \alpha I(\mathbf{y}_t;\mathbf{X}_{[1:t]}) \\
			\end{split}
			\label{EQU:DataComplexity}
			\end{equation}
		\end{definition}

%

	\subsection{Learning Capacity of RNN}
	To deduce RNN's learning capacity, we first qualitatively demonstrate that the memory of previous inputs would leak when a new input is encoded in the hidden layers, and we then analyze the relationship between the recurrent information rate and the input information rate in conventional RNN.
	
	\vspace{0.2cm}
	\subsubsection{New Information vs. Historical Information}
	
	~
	
	Assume that $\mathbf{h}_{t-1}$ contains partial information about $\mathbf{x}_t$: $H(\mathbf{x}_{t}) = I(\mathbf{h}_{t-1}; \mathbf{x}_t) + H(\mathbf{x}_{t}|\mathbf{h}_{t-1})$, where $H(\mathbf{x}_{t}|\mathbf{h}_{t-1}) >0$, and thus there exists some function $g(\cdot)$ such that
	$\mathbf{x}_t = g(\mathbf{h}_{t-1})+\mathbf{\epsilon}_t,$
	where $E( \mathbf{h}_{t-1}\mathbf{\epsilon}^T_t)=0$ and $Var[\mathbf{\epsilon}_t| \mathbf{h}_{t-1}]>0$~\cite{belletti2018factorized}. 
	We can model the $\mathbf{\epsilon}_t$ as a Gaussian random white noise.
	In so doing, the input $\mathbf{x}_t$ is a Gaussian random variable.
	The MI between the hidden states in consecutive time steps is given in the following theorem:
	
	\begin{theorem}\label{Theorem1}
		Consider $\mathbf{x}_t \sim N(\mathbf{0}, diag(\sigma^2,\sigma^2,\cdots,\sigma^2))$, $\mathbf{W}$ is a full-rank matrix with the dimension of $dim(h)$, then we have:
		\begin{equation}
		\small
		\begin{split}
		I(\mathbf{h}_t,\tanh(\mathbf{U}\mathbf{x}_t+ \mathbf{W} \mathbf{h}_{t} +b)) =  \frac{dim(h)}{2}\log(1 +\frac{\lambda}{\sigma^2 \eta}),
		\end{split}	
		\label{EQU:HistI}
		\end{equation}
		where $\lambda$ and $\eta$ are the largest eigenvalues of $\mathbf{W^TW}$ and $\mathbf{U^TU}$, respectively.
	\end{theorem}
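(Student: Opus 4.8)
I read the left-hand side as the mutual information between a hidden state and its successor under one recurrent step, i.e. $I\bigl(\mathbf{h}_{t-1};\mathbf{h}_t\bigr)$ with $\mathbf{h}_t=\tanh(\mathbf{U}\mathbf{x}_t+\mathbf{W}\mathbf{h}_{t-1}+b)$; the quantity measures how much of the historical content survives the encoding of the new input. The plan is to reduce this to the capacity of a Gaussian vector channel. First I would split $I(\mathbf{h}_{t-1};\mathbf{h}_t)=H(\mathbf{h}_t)-H(\mathbf{h}_t\mid\mathbf{h}_{t-1})$. Using the decomposition from the preceding discussion, $\mathbf{x}_t=g(\mathbf{h}_{t-1})+\mathbf{\epsilon}_t$ with $E[\mathbf{h}_{t-1}\mathbf{\epsilon}^T_t]=0$ and $\mathbf{\epsilon}_t$ modeled as $N(\mathbf{0},\sigma^2 I)$ white noise independent of $\mathbf{h}_{t-1}$, so the affine pre-activation $\mathbf{W}\mathbf{h}_{t-1}+\mathbf{U}\mathbf{x}_t+b$ is jointly Gaussian in $(\mathbf{h}_{t-1},\mathbf{\epsilon}_t)$, and conditioning on $\mathbf{h}_{t-1}$ removes precisely the $\mathbf{U}g(\mathbf{h}_{t-1})$ term and leaves only the fresh contribution $\mathbf{U}\mathbf{\epsilon}_t$.

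Second, I would push the $\tanh$ through by linearizing it at its operating point (equivalently, working in the non-saturating regime where $\tanh$ acts essentially as the identity on the relevant range, so its Jacobian enters $H(\mathbf{h}_t)$ and $H(\mathbf{h}_t\mid\mathbf{h}_{t-1})$ identically and cancels in the difference). To this order $\mathbf{h}_t$ is Gaussian with $\mathrm{Cov}(\mathbf{h}_t\mid\mathbf{h}_{t-1})=\sigma^2\mathbf{U}\mathbf{U}^T$ and $\mathrm{Cov}(\mathbf{h}_t)=\mathbf{W}\,\mathrm{Cov}(\mathbf{h}_{t-1})\,\mathbf{W}^T+\sigma^2\mathbf{U}\mathbf{U}^T$. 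Invoking the Gaussian differential-entropy formula and normalizing the (stationary) hidden state so that $\mathrm{Cov}(\mathbf{h}_{t-1})=I$, this gives
\begin{equation}
\small
I(\mathbf{h}_{t-1};\mathbf{h}_t)=\tfrac12\log\frac{\det\!\bigl(\mathbf{W}\mathbf{W}^T+\sigma^2\mathbf{U}\mathbf{U}^T\bigr)}{\det\!\bigl(\sigma^2\mathbf{U}\mathbf{U}^T\bigr)}.
\end{equation}

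Third, I would reduce this matrix expression to the stated scalar. Treating the two Gram matrices as isotropic at the scale of their dominant modes — replacing $\mathbf{W}^T\mathbf{W}$ by $\lambda I$ and $\mathbf{U}^T\mathbf{U}$ by $\eta I$, where $\lambda=\lambda_{\max}(\mathbf{W}^T\mathbf{W})$ and $\eta=\lambda_{\max}(\mathbf{U}^T\mathbf{U})$ (both square and sharing spectra with $\mathbf{W}\mathbf{W}^T$, resp. the relevant part of $\mathbf{U}\mathbf{U}^T$) — the numerator becomes $(\lambda+\sigma^2\eta)^{dim(h)}$ and the denominator $(\sigma^2\eta)^{dim(h)}$, so the ratio is $\bigl(1+\tfrac{\lambda}{\sigma^2\eta}\bigr)^{dim(h)}$ and the claimed identity $I=\tfrac{dim(h)}{2}\log\!\bigl(1+\tfrac{\lambda}{\sigma^2\eta}\bigr)$ follows. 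This is exactly the capacity of $dim(h)$ parallel Gaussian channels each of signal-to-noise ratio $\lambda/(\sigma^2\eta)$; an alternative route to the same expression is via the data-processing inequality on the chain $\mathbf{h}_{t-1}\!\to\!\mathbf{W}\mathbf{h}_{t-1}\!\to\!\mathbf{h}_t$ together with an equal-power allocation of $\mathrm{tr}(\mathbf{W}\mathbf{W}^T)\le\lambda\cdot dim(h)$ against the noise floor $\sigma^2\eta$.

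The main obstacle is the passage from the exact determinant ratio to the single-eigenvalue formula: the eigenmodes of $\mathbf{W}\mathbf{W}^T$ and $\mathbf{U}\mathbf{U}^T$ need not be aligned, and replacing every eigenvalue by the spectral maximum yields equality only under the isotropy idealization, so I would state explicitly that the identity is to be read in that idealized (per-mode-uniform) sense. Two secondary points also need attention: (i) the linearization of $\tanh$, which I would justify by noting that in the non-saturating regime its derivative is bounded away from $0$ and contributes a common multiplicative factor to both entropies; and (ii) the validity of the conditional-entropy computation, which rests on the orthogonality $E[\mathbf{h}_{t-1}\mathbf{\epsilon}^T_t]=0$ plus the Gaussian modeling of $\mathbf{\epsilon}_t$ ensuring that, given $\mathbf{h}_{t-1}$, the residual uncertainty in $\mathbf{h}_t$ is exactly $\mathbf{U}\mathbf{\epsilon}_t$ with covariance $\sigma^2\mathbf{U}\mathbf{U}^T$.
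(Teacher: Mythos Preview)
Your argument lands at the right place and is essentially sound, but the paper takes a shorter path on two of your three steps. For the $\tanh$, the paper does not linearize at all: it invokes the exact invariance of mutual information under coordinatewise homeomorphisms (since $\tanh$ is a smooth bijection $\mathbb{R}\to(-1,1)$), giving $I(\mathbf{h}_t,\tanh(\cdot))=I(\mathbf{h}_t,\cdot)$ identically, with no appeal to a non-saturating regime or a Jacobian-cancellation argument; your linearization is thus an unnecessary weakening of a fact that holds exactly. For the Gaussian step, the paper simply quotes the closed form $I(\mathbf{c};\mathbf{d})=\tfrac12\log\lvert\mathbf{I}+\mathbf{\Sigma}^{-1/2}\mathbf{W}^T\mathbf{W}\mathbf{\Sigma}^{-1/2}\rvert$ for a linear-Gaussian channel (Chechik et al.) rather than computing $H(\mathbf{h}_t)$ and $H(\mathbf{h}_t\mid\mathbf{h}_{t-1})$ separately; your determinant ratio is equivalent after the normalization $\mathrm{Cov}(\mathbf{h}_{t-1})=I$. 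The final reduction from the log-determinant to the single-eigenvalue expression is handled in the paper by a citation to Proposition~1.4 of Belletti et al., whereas you spell it out as an explicit isotropy idealization replacing $\mathbf{W}^T\mathbf{W}\to\lambda I$ and $\mathbf{U}^T\mathbf{U}\to\eta I$; your version is more transparent about this step being an idealization rather than an identity, which is arguably a virtue.
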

	
	\begin{proof}
		Consider two random variables $\mathbf{c}, \mathbf{d}\in\Re^{dim(H)}$ that are linearly correlated with each other by $\mathbf{c} = \mathbf{W}\mathbf{d} + \mathbf{\epsilon}$, where $\mathbf{\epsilon} \sim N(\mathbf{0},\mathbf{\Sigma})$, then according to~\cite{chechik2005information}, we have
		\begin{equation}
		\small
		I(\mathbf{c},\mathbf{d}) = \frac{1}{2}\log(|\mathbf{I} + \mathbf{\Sigma}^{-1/2}\mathbf{W}^T\mathbf{W}\mathbf{\Sigma}^{-1/2}|).
		\label{EQU:GauMI}
		\end{equation} 
		Becuase MI is invariant under reparameterization by homeomorphisms~\cite{kraskov2004estimating},
		we have the following:
		\begin{equation}
		\small
		I(\mathbf{h}_t,\tanh(\mathbf{U}\mathbf{x}_t+ \mathbf{W} \mathbf{h}_{t} +b))  = I(\mathbf{h}_t,\mathbf{U}\mathbf{x}_t+ \mathbf{W} \mathbf{h}_{t} ).
		\label{EQU:MI_invariant}
		\end{equation} 
		Following Proposition 1.4 in \cite{belletti2018factorized}, we combine Eqs.~(\ref{EQU:GauMI}) and (\ref{EQU:MI_invariant}) and thus obtain Eq.~(\ref{EQU:HistI}). 
		This completes the proof of Theorem~\ref{Theorem1}.
	\end{proof}
	
	Based on Theorem 1, we can evaluate how the new information affects the memory of RNN on previous inputs.
	
	\begin{corollary}
		Assume $H_x =  H(\mathbf{x}_t|\mathbf{X}_{[1:t-1]})/N^i$, where $N^i$ is the dimension of $\mathbf{x}_t$, then 
		\begin{equation}
		\small
		I(\mathbf{h}_t,\tanh(\mathbf{U}\mathbf{x}_t+ \mathbf{W} \mathbf{h}_t + \mathbf{b})) = \frac{dim(h)}{2}\log(1 + \frac{{2\pi e}\lambda}{ \eta e^{2H_x}}) .
		\label{EQU:IRecurrentStru}
		\end{equation}
	\end{corollary}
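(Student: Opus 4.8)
The plan is to re-express the conclusion of Theorem~\ref{Theorem1} by trading the isotropic noise level $\sigma^2$ for a quantity that depends on $H_x$. First I would recall the differential entropy of an isotropic Gaussian: for $\mathbf{z}\sim N(\mathbf{0},\sigma^2\mathbf{I}_{n})$ one has $H(\mathbf{z})=\tfrac{n}{2}\log(2\pi e\,\sigma^2)$, so the \emph{per-coordinate} entropy is $\tfrac12\log(2\pi e\,\sigma^2)$, independent of the dimension $n$.

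Next I would connect this to $H_x$ through the decomposition $\mathbf{x}_t=g(\mathbf{h}_{t-1})+\mathbf{\epsilon}_t$ introduced in the preceding subsection, where $\mathbf{\epsilon}_t$ is a Gaussian white noise whose (isotropic) variance is precisely the $\sigma^2$ of Theorem~\ref{Theorem1}. Conditioning on $\mathbf{X}_{[1:t-1]}$ fixes $g(\mathbf{h}_{t-1})$ and leaves only the innovation $\mathbf{\epsilon}_t$, so $H(\mathbf{x}_t\mid\mathbf{X}_{[1:t-1]})=H(\mathbf{\epsilon}_t)=\tfrac{N^i}{2}\log(2\pi e\,\sigma^2)$. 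Dividing by $N^i$ yields $H_x=\tfrac12\log(2\pi e\,\sigma^2)$, and inverting gives $\sigma^2=e^{2H_x}/(2\pi e)$.

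Finally I would substitute this value of $\sigma^2$ into Eq.~(\ref{EQU:HistI}):
\[
I\bigl(\mathbf{h}_t,\tanh(\mathbf{U}\mathbf{x}_t+\mathbf{W}\mathbf{h}_t+\mathbf{b})\bigr)
=\frac{dim(h)}{2}\log\!\Bigl(1+\frac{\lambda}{\sigma^2\eta}\Bigr)
=\frac{dim(h)}{2}\log\!\Bigl(1+\frac{2\pi e\,\lambda}{\eta\,e^{2H_x}}\Bigr),
\]
which is exactly Eq.~(\ref{EQU:IRecurrentStru}), completing the argument.

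The hard part will be the middle step, i.e.\ justifying that the $\sigma^2$ appearing in Theorem~\ref{Theorem1} may legitimately be read as the \emph{conditional} (innovation) variance of $\mathbf{x}_t$ rather than its unconditional variance. Making this rigorous requires (i) invoking the model $\mathbf{x}_t=g(\mathbf{h}_{t-1})+\mathbf{\epsilon}_t$ with $\operatorname{Var}[\mathbf{\epsilon}_t\mid\mathbf{h}_{t-1}]>0$ and Gaussian, (ii) arguing that, as far as the mutual information with $\mathbf{h}_t$ is concerned, the deterministic term $g(\mathbf{h}_{t-1})$ can be absorbed into the reparameterization-invariant argument of $\tanh$ (cf.\ Eq.~(\ref{EQU:MI_invariant})), so only $\mathbf{\epsilon}_t$ supplies fresh randomness, and (iii) checking that the conditional covariance remains isotropic so the clean per-coordinate entropy identity applies. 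Once $\sigma^2$ is correctly identified with the conditional variance, the rest is the one-line substitution displayed above.
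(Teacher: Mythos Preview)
Your proposal is correct and follows essentially the same route as the paper: identify $\sigma^2$ with the (per-coordinate) conditional variance of $\mathbf{x}_t$ via the Gaussian entropy--variance relation $\sigma^2=e^{2H_x}/(2\pi e)$, then substitute into Eq.~(\ref{EQU:HistI}). The paper's version is terser---it simply asserts the relation and cites Cover \& Thomas---whereas you spell out the per-coordinate entropy computation and the innovation decomposition; but the underlying argument is the same.
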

	
	\begin{proof}
		The RNN would extract information of $\mathbf{x}_t$ into the hidden layer via $\mathbf{U}\mathbf{x}_t$, and the information presenting at time step $t$ is $H(\mathbf{x}_t|\mathbf{X}_{[1:t-1]})$. 	
		For the Gaussian random variable $\mathbf{x}_t$, the relation between the variance $\sigma^2$ and entropy $H_x$ is given as follows:
		\begin{equation}
		\small
		\sigma^2 = \frac {e^{2H_x}}{{2\pi e}}.
		\label{EQU:Variance}
		\end{equation}
		The proof of Eq.~(\ref{EQU:Variance}) can be found in Page 244 of~\cite{cover2012elements}.
		By substituting it into Eq.~(\ref{EQU:HistI}), we complete the proof.
	\end{proof}
	
	Therefore, the information about the previous inputs would decrease when increasing $\eta H(\mathbf{x}_t|\mathbf{X}_{[1:t-1]})$. 
	In other words, more information about previous inputs would be lost if more new information is stored. 
	To store more information, we can increase the size of the hidden layers or increase $\lambda$, which can help preserve the long-range dependency~\cite{le2015simple,vorontsov2017orthogonality}.

	\vspace{0.2cm}
	\subsubsection{Recurrent Information Rate $D^{R}$ and Input Information Rate $D^{X}$}
	
	\
	
	Given the limited capacity of one hidden layer with a fixed number of hidden units, some information about previous inputs would be lost when new information is stored. 
	Based on Theorem 1 and Corollary 1, we have the following corollary.
	
	\begin{corollary}
		Assume that $\mathbf{h}_{t-1} \sim N(\mathbf{0}, \mathbf{I})$, then the input rate $D^{X}$ and the recurrent rate $D^{R}$ can be defined as follows:
		\begin{equation}
		\small
		\begin{split}
		D^{X} &\coloneqq  I(\mathbf{h}_t; \mathbf{x}_t) = \frac{dim(h)}{2}\log(1 + \frac{\eta }{\lambda}),\\
		D^{R}(1) & \coloneqq   I(\mathbf{h}_t; \mathbf{x}_{t-1}) = \frac{dim(h)}{2}\log(1 + \frac{{2\pi e}\lambda}{\eta e^{ 2H_x}}),\\
		D^{R}(\tau) & \coloneqq   I(\mathbf{h}_t; \mathbf{x}_{t-\tau}) = \frac{dim(h)}{2}\log(1 + \frac{{2\pi e}(1-\lambda)\lambda^{\tau}}{(1 - \lambda^{\tau})\eta e^{ 2H_x} }).\\
		\end{split}
		\label{EQU:Relation}
		\end{equation}
	\end{corollary}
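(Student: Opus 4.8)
\noindent\emph{Proof sketch.} The plan is to read off all three rates from the same three tools already assembled: invariance of mutual information under the $\tanh$ homeomorphism (Eq.~\eqref{EQU:MI_invariant}), the Gaussian mutual-information identity (Eq.~\eqref{EQU:GauMI}), and the variance--entropy relation $\sigma^{2}=e^{2H_x}/(2\pi e)$ (Eq.~\eqref{EQU:Variance}). For each rate I would (i) write $\mathbf{h}_t$ through its (possibly unrolled) recursion $\mathbf{h}_t=\tanh(\mathbf{U}\mathbf{x}_t+\mathbf{W}\mathbf{h}_{t-1}+b)$; (ii) drop the $\tanh$; (iii) split the affine pre-activation into the term carrying the variable of interest (``signal'') and the rest (``noise''), which is Gaussian because $\mathbf{h}_{t-1}\sim N(\mathbf 0,\mathbf I)$; (iv) apply Eq.~\eqref{EQU:GauMI}; and (v) collapse $\mathbf{W}^{T}\mathbf{W}$ and $\mathbf{U}^{T}\mathbf{U}$ to their leading eigenvalues $\lambda\mathbf I$ and $\eta\mathbf I$.

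For $D^{X}=I(\mathbf{h}_t;\mathbf{x}_t)$ the signal is $\mathbf{U}\mathbf{x}_t$ and the noise $\mathbf{W}\mathbf{h}_{t-1}$ has covariance $\mathbf{W}\mathbf{W}^{T}$; this mirrors the computation behind Eq.~\eqref{EQU:HistI} with the input and recurrent terms exchanging the roles of signal and noise, and the eigenvalue collapse then gives $\frac{dim(h)}{2}\log(1+\eta/\lambda)$ (the input being taken in the same unit-variance form as $\mathbf{h}_{t-1}$). For $D^{R}(1)=I(\mathbf{h}_t;\mathbf{x}_{t-1})$ I would treat $\mathbf{h}_{t-1}$ as carrying essentially the information of $\mathbf{x}_{t-1}$, so that $I(\mathbf{h}_t;\mathbf{x}_{t-1})=I(\mathbf{h}_t;\mathbf{h}_{t-1})$; but this is exactly the quantity already evaluated in Corollary~1, Eq.~\eqref{EQU:IRecurrentStru}, which is the $\tau=1$ case of the claimed formula since $(1-\lambda)\lambda^{\tau}/(1-\lambda^{\tau})\to\lambda$ there.

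For general $\tau$ I would unroll the recursion $\tau$ steps, so that the contribution of $\mathbf{x}_{t-\tau}$ to $\mathbf{h}_t$ is the image of $\mathbf{h}_{t-\tau}$ under $\mathbf W$ applied $\tau$ times --- leading-eigenvalue power $\lambda^{\tau}$ --- whereas the $\tau$ later input injections $\mathbf W^{k}\mathbf U\mathbf{x}_{t-k}$, $k=0,\dots,\tau-1$, act as mutually independent Gaussian noises with leading-eigenvalue powers in geometric progression $1,\lambda,\dots,\lambda^{\tau-1}$ that sum to $(1-\lambda^{\tau})/(1-\lambda)$. Feeding this signal-to-noise ratio into Eq.~\eqref{EQU:GauMI}, carrying along the one-step base factor $2\pi e/(\eta e^{2H_x})$ inherited from Eq.~\eqref{EQU:IRecurrentStru} and the substitution from Eq.~\eqref{EQU:Variance}, should reproduce the ratio $2\pi e(1-\lambda)\lambda^{\tau}/\!\left((1-\lambda^{\tau})\eta e^{2H_x}\right)$ inside the logarithm; equivalently, one can argue this inductively, each extra recurrent step scaling the surviving signal power by $\lambda$ and appending one more independent noise increment.

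The main obstacle I anticipate is making steps (iii)--(v) honest in the $\tau$-step case: under the stationarity convention $\mathbf{h}_{s}\sim N(\mathbf 0,\mathbf I)$ for all $s$, one must show that each recurrent pass both attenuates the retained information by the factor $\lambda$ and injects a fresh, effectively independent Gaussian increment, so that the $\mathbf{x}_{t-\tau}$-signal contributes exactly $\lambda^{\tau}$ while the accumulated recurrent noise telescopes to the geometric sum $(1-\lambda^{\tau})/(1-\lambda)$. The eigenvalue collapse in step (v) is the other delicate point: replacing the non-commuting products of $\mathbf W$ and $\mathbf U$ by powers of their top eigenvalues is exact only when these matrices are scalar multiples of orthogonal matrices, and otherwise is the controlling approximation behind the stated closed forms.
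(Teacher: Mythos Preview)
Your approach is essentially the paper's own: the paper does not give a separate proof of this corollary but simply says ``Based on Theorem~1 and Corollary~1, we have the following corollary,'' so the intended derivation is exactly the one you spell out --- drop the $\tanh$ via Eq.~\eqref{EQU:MI_invariant}, apply the Gaussian MI identity Eq.~\eqref{EQU:GauMI}, substitute the variance--entropy relation Eq.~\eqref{EQU:Variance}, and collapse to leading eigenvalues. Your derivation is in fact more explicit than what the paper provides; in particular, your unrolling argument for $D^{R}(\tau)$ (signal attenuated by $\lambda^{\tau}$, accumulated noise summing geometrically to $(1-\lambda^{\tau})/(1-\lambda)$) and your honest flagging of the two approximation points (the per-step $\tanh$ drop under the stationarity convention, and the eigenvalue collapse for non-commuting $\mathbf W,\mathbf U$) go beyond anything the paper makes explicit.
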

	As shown in this corollary, if we increase the strength of $\eta$, the input rate $D^{X}$ will increase accordingly while the recurrent rate $D^{R}$ will decrease. Therefore, $D^{X}$ and $D^{R}$ are inversely correlated. In other words, in the RNN with a fixed number of hidden units, the input rate $D^{X}$ and the recurrent rate $D^{R}$ cannot both be high. 

	In this paper, we assume that $0 \leq \lambda < 1$, because the spectral radius of $\mathbf{W}$ tends to be smaller than 1 for compressing the long-range information to match the real-world information pattern shown in Fig.~\ref{Figure:IllustrationDataComplexiy}(b) and to avoid the gradient exploding~\cite{pascanu2013difficulty}. 
	Under this mild assumption, we show that the memory regarding previous inputs would decay exponentially.
\begin{proposition}
		$I(h_t; h_{t-\tau})$ decays exponentially in the long-range dependency if $0 \leq \lambda < 1$.
	\end{proposition}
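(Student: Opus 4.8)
The plan is to extract the exponential decay directly from the closed-form rate in Corollary~2 and then finish with an elementary bound on the logarithm. First I would establish the $\tau$-step analogue of that formula with $\mathbf{h}_{t-\tau}$ in place of $\mathbf{x}_{t-\tau}$: unrolling the recurrence for $\tau$ steps, the only channel carrying $\mathbf{h}_{t-\tau}$ into $\mathbf{h}_t$ is the matrix power $\mathbf{W}^{\tau}$ (the intervening element-wise $\tanh$ maps are homeomorphisms and hence do not change the mutual information, exactly as in the step Eq.~(\ref{EQU:MI_invariant}) of the proof of Theorem~\ref{Theorem1}), while the fresh inputs $\mathbf{x}_{t-\tau+1},\dots,\mathbf{x}_t$ enter additively as a Gaussian perturbation whose covariance accumulates as a geometric sum $\sum_{k=0}^{\tau-1}\mathbf{W}^{k}\mathbf{U}\mathbf{U}^{T}(\mathbf{W}^{k})^{T}$. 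Substituting $\mathbf{W}^{\tau}$ and this accumulated covariance into the Gaussian mutual-information identity Eq.~(\ref{EQU:GauMI}), using that the top eigenvalue of $(\mathbf{W}^{\tau})^{T}\mathbf{W}^{\tau}$ is $\lambda^{\tau}$ and that the accumulated noise eigenvalues sum to the geometric series $\tfrac{1-\lambda^{\tau}}{1-\lambda}$ (times $\eta$), I obtain an expression of the very shape of $D^{R}(\tau)$ in Eq.~(\ref{EQU:Relation}), namely \[ I(\mathbf{h}_t;\mathbf{h}_{t-\tau}) \;=\; \frac{dim(h)}{2}\log\!\left(1 + \frac{c\,(1-\lambda)\,\lambda^{\tau}}{1-\lambda^{\tau}}\right), \] where $c>0$ lumps together the constants $\eta$, $e^{2H_x}$ and $2\pi e$.

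Second, I would close the argument with the elementary inequalities $\log(1+x)\le x$ for $x\ge 0$ and $1-\lambda^{\tau}\ge 1-\lambda$ for $\tau\ge 1$ (which holds because $\lambda^{\tau}\le\lambda$ when $0\le\lambda<1$). Together these give \[ I(\mathbf{h}_t;\mathbf{h}_{t-\tau}) \;\le\; \frac{dim(h)}{2}\cdot\frac{c\,(1-\lambda)\,\lambda^{\tau}}{1-\lambda^{\tau}} \;\le\; \frac{c\,dim(h)}{2}\,\lambda^{\tau} \;=\; \frac{c\,dim(h)}{2}\,e^{-\tau\log(1/\lambda)}. \] Since $0\le\lambda<1$ forces $\log(1/\lambda)>0$ (with the degenerate case $\lambda=0$ being trivial, as the memory then vanishes after a single step), the right-hand side decays exponentially in $\tau$, which is the claim; a matching lower bound of order $\lambda^{\tau}$ follows from $\log(1+x)\ge x/2$ on $[0,1]$, so the rate is tight. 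If one prefers to avoid the explicit unrolling, the data-processing inequality applied along the forward chain $\mathbf{h}_{t-\tau}\rightarrow\mathbf{h}_{t-\tau+1}\rightarrow\cdots\rightarrow\mathbf{h}_t$ at least shows $I(\mathbf{h}_t;\mathbf{h}_{t-\tau})$ is non-increasing in $\tau$, but it does not by itself pin down the geometric rate, so the unrolling seems unavoidable.

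The two displayed inequalities are routine; I expect the real obstacle to be the bookkeeping in the first step. Concretely, one must verify that composing $\tau$ affine-plus-$\tanh$ layers keeps $(\mathbf{h}_t,\mathbf{h}_{t-\tau})$ jointly Gaussian after the homeomorphic change of variables, that the cross-covariance is governed precisely by $\mathbf{W}^{\tau}$, and that $\|\mathbf{W}^{\tau}\|_2^{2}=\lambda^{\tau}$ --- the last identity requires $\mathbf{W}$ to be normal; without normality one only has $\|\mathbf{W}^{\tau}\|_2^{2}\le(\|\mathbf{W}\|_2^{2})^{\tau}$, and the proof then goes through with $\lambda$ replaced by $\|\mathbf{W}\|_2^{2}$, which the standing assumption (spectral radius below one) still controls. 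One also has to confirm that the geometric factor $\tfrac{1-\lambda^{\tau}}{1-\lambda}$ in the denominator cannot absorb the $\lambda^{\tau}$ in the numerator --- it cannot, since it stays confined to $[1,\tfrac{1}{1-\lambda}]$ for all $\tau\ge1$.
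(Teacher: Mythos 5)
Your proposal is correct and follows the same essential route as the paper: both arguments read the exponential decay off the closed-form rate in Eq.~(\ref{EQU:Relation}) and then observe that the logarithm is of order $\lambda^{\tau}$ once $\lambda^{\tau}$ is small. The paper does this in two lines, by Taylor-expanding $D^{R}(\tau)=I(\mathbf{h}_t;\mathbf{x}_{t-\tau})$ around $\lambda^{\tau}=0$ to get $D^{R}(\tau)\approx \frac{dim(h)}{2}\cdot\frac{2\pi e(1-\lambda)\lambda^{\tau}}{\eta e^{2H_x}}=O(\lambda^{\tau})$, silently treating that quantity as the memory in question. You differ in two worthwhile ways. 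First, you notice that the proposition literally concerns $I(\mathbf{h}_t;\mathbf{h}_{t-\tau})$ rather than $I(\mathbf{h}_t;\mathbf{x}_{t-\tau})$, and you sketch the unrolled derivation (cross-covariance governed by $\mathbf{W}^{\tau}$, accumulated input noise as a geometric sum, MI invariance under the $\tanh$ reparameterization as in Eq.~(\ref{EQU:MI_invariant})) to obtain a closed form of the same shape; the paper skips this step, so your caveats about normality of $\mathbf{W}$ (otherwise replace $\lambda$ by $\|\mathbf{W}\|_2^{2}$) and joint Gaussianity are exactly the hypotheses the paper leaves implicit. Second, you replace the asymptotic Taylor expansion with the non-asymptotic bounds $\log(1+x)\le x$ and $1-\lambda^{\tau}\ge 1-\lambda$, plus a matching lower bound, which gives a genuine $\Theta(\lambda^{\tau})$ statement valid for all $\tau\ge 1$ rather than only in the large-$\tau$ limit. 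So your argument buys rigor and fidelity to the stated quantity at the cost of the bookkeeping you flag; the paper's buys brevity by working directly with $D^{R}(\tau)$ and an approximation.
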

	
	\begin{proof}
		From Eq.~(\ref{EQU:Relation}), we know that when $\tau$ becomes large, $(\lambda)^\tau$ becomes very small. Thus using Taylor expansion at the point $(\lambda)^\tau = 0$, we have:
		\begin{equation}
		\small
		D^{R}(\tau) \approx \frac{dim(h)}{2}\cdot\frac{{2\pi e}(1-\lambda) \lambda ^\tau}{\eta e^{ 2H_x}  } = O( \lambda ^\tau).
		\end{equation} 
		So the information decays exponentially with time lag $\tau$. 
	\end{proof}

	\begin{figure}[!t]
		\centering
		\setlength\tabcolsep{1.0pt}
		\begin{tabular}[t]{cc}
			\includegraphics[width=0.63\linewidth]{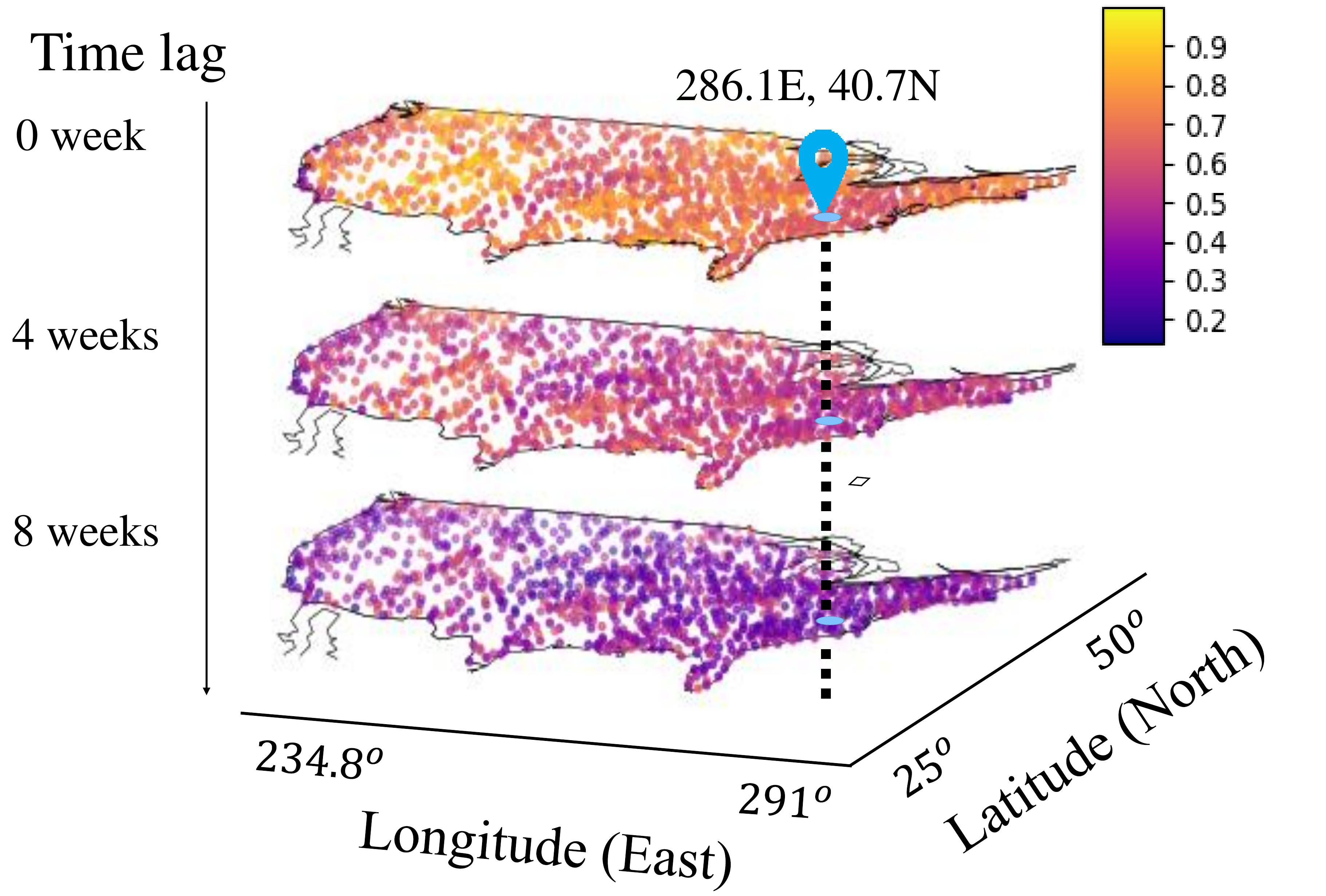}\label{Figure:IllustrationSpatialDataComplexiy} &
			\includegraphics[width=0.28\linewidth]{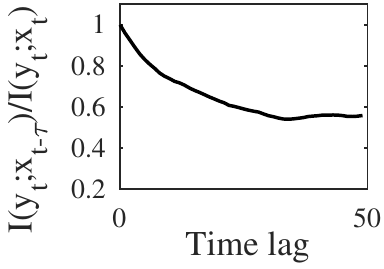}\label{Figure:IllustrationTemporalDataComplexiy} \\
			\footnotesize  (a) &\footnotesize \hspace{0.45cm} (b) \\
		\end{tabular}
		\caption{Information-theoretic perspective on multi-scale spatio-temporal dependency. (a) MI between the target variable $Y_{t,j}$ and input covariates $X_{t-lag,k}$ in a climate dataset, where $j$ represents the location of (286.1E, 40.7N) marked with a pointer and $lag$ indicates the time lag of $X_{t-lag,k}$ with respect to $Y_{t,j}$. The colour intensity in each location $k$ visualizes the value of $\sum_{t=lag}^{T}I(Y_{t,j};X_{t-lag,k})/T$, with $lag = 0$, $4$, and $8$ weeks in the top, medium, and bottom layers, respectively.
		(b) Normalized MI between the target variable $\mathbf{y}_t$ and the lagged input covariates $ \mathbf{x}_{t-\tau}$, i.e., $I(\mathbf{y}_t; \mathbf{x}_{t-\tau})/I(\mathbf{y}_t; \mathbf{x}_{t})$, with varying time lags $\tau$ in a traffic dataset.} 
		\label{Figure:IllustrationDataComplexiy}
		\vspace{-0.2cm}
	\end{figure}

	\subsection{Information-based Learning Capacity (i-CAP) of I$^2$DRNN}
	
	\
	
	In this subsection, we examine the capacity of I$^2$DRNN to learn multi-scale spatio-temporal dependency.
	We first intuitively explain the advantage of the structure of I$^2$DRNN over the stacked RNN in capturing multi-scale information. We then quantitatively demonstrate the superiority of I$^2$DRNN by comparing its capacity with that of the stacked RNN for explaining the potential performance gains.
	
	\vspace{0.2cm}
	\subsubsection{Structures: I$^2$DRNN vs. Stacked RNN}
		
	Multi-scale spatio-temporal dependency exists in many real-world datasets. Fig.~\ref{Figure:IllustrationDataComplexiy}(a) shows the MI between the target variable and input covariates with various locations and different time lags in a climate dataset\footnote{Please refer to Section~\ref{SEC:DD} for the details of this climate dataset.}. 
	The colour intensity in each point indicates the MI between the input covariates at this point and the target variable at the marked point (286.1E, 40.7N) in the top layer.
	Fig.~\ref{Figure:IllustrationDataComplexiy}(b) shows the MI between the target variable and the lagged input with varying time lags $\tau$ in a traffic dataset\footnote{Please refer to Section~\ref{SEC:DD} for the details of this traffic dataset.}. 
	As shown in these two figures, in addition to the information about the target variable among the input features in nearby locations and time steps, considerable amounts of information are in faraway locations and time steps, which should be considered in an aggregate manner.

	Stacked RNN performs well in many applications. 
	However, in the following, we interpret the limitations of stacked RNN in balancing the input and memories, and demonstrate the corresponding improvement to overcome these limitations.

	\begin{figure}[!t]
		\centering
		\setlength\tabcolsep{1.0pt}
		\begin{tabular}[t]{cc}
			\hspace{0.9cm} \hspace{-0.2cm}\includegraphics[width=0.305\linewidth]{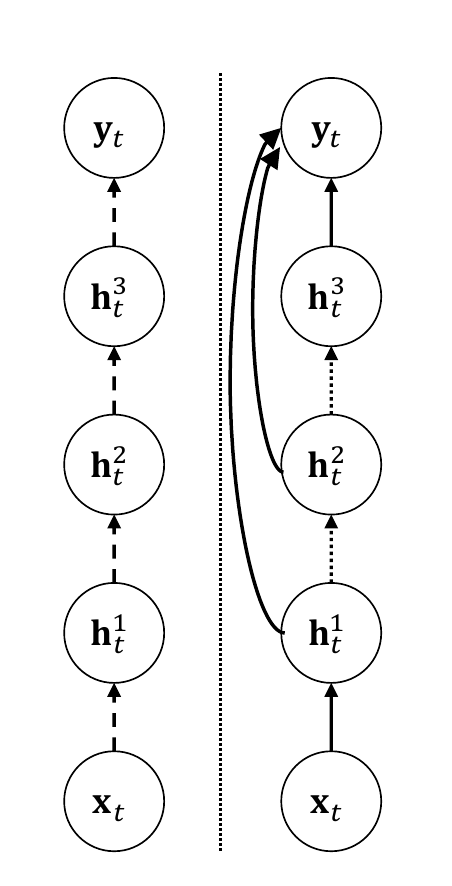}\label{Figure:Extension(a)}  &
			\hspace{0.0cm} \includegraphics[width=0.55\linewidth]{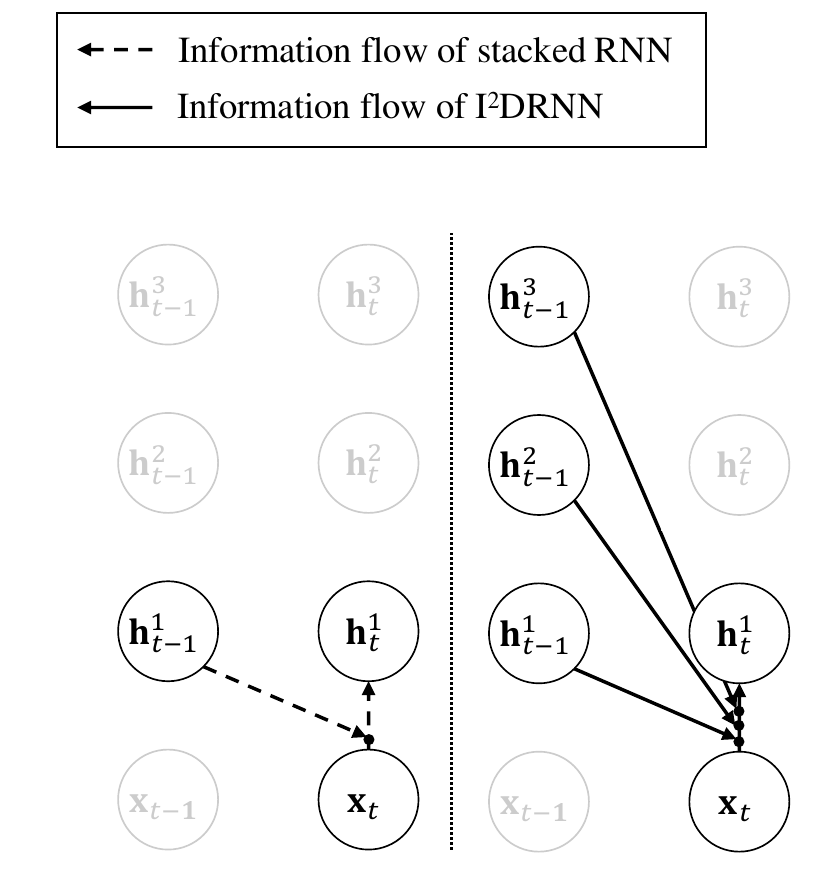}\label{Figure:Extension(b)}  \\
			\hspace{0.65cm} \footnotesize  (a) &\hspace{0.55cm} \footnotesize (b) \\
		\end{tabular}
		\caption{The advantage of I$^2$DRNN's structure over a stacked RNN's structure. (a) Compared to stacked RNN (left), the fully connected output module in I$^2$DRNN (right) provides shortcuts from $\mathbf{h}_{t}^1, \mathbf{h}_{t}^2, \mathbf{h}_{t}^3$ to $\mathbf{y}_t$ to preserve short-term and long-term memories in different layers. (b) Stacked RNN (left) uses only the information in $\mathbf{h}_{t-1}^1$ in the previous time step; I$^2$DRNN (right) uses all available information in $\mathbf{h}_{t-1}^1, \mathbf{h}_{t-1}^2, \mathbf{h}_{t-1}^3$ in the previous time step to control the input to prevent redundant information filling in the memory bank.} 
		\vspace{-0.2cm}
		\label{Figure:Extension}
	\end{figure}

	\begin{itemize}
		\item First, in the procedure of predicting $\mathbf{y}_t$, the information $I(\mathbf{y}_t;\mathbf{x}_{t})$ would go through layer $1$ to layer $L$ in the stacked RNN, as illustrated in the left of Fig.~\ref{Figure:Extension}(a). As shown in Eq.~(\ref{EQU:Relation}), more memory about past inputs would be lost when more new information is inputted. In other words, the long-term memory in the higher layers would be lost due to the flow of $I(\mathbf{y}_t;\mathbf{X}_{[1:t]})$. 
		In contrast to the stacked RNN, I$^2$DRNN provides the shortcut from each hidden layer to the output layer, as illustrated in the right of Fig.~\ref{Figure:Extension}(a), which directly utilizes the information from the bottom layers without sacrificing the long-term information in the higher layers.
		
		\item Second, as shown in Eq.~(\ref{EQU:Variance}), the variance of the input noise is determined by the conditional entropy $H(\mathbf{x}_t|\mathbf{h}_t)$. In the stacked RNN, the input of each layer is controlled simply by the hidden units of the same layer, as illustrated in the left of Fig.~\ref{Figure:Extension}(b). Because $H(\mathbf{x}_t| \mathbf{h}^{1}_{t-1}) \geq H(\mathbf{x}_t| \mathbf{h}^{2}_{t-1}, \mathbf{h}^{1}_{t-1}) $, I$^2$DRNN adds the feedback connection to reduce the variance of $\epsilon_t$ to preserve the information about historical inputs, as illustrated in the right of Fig.~\ref{Figure:Extension}(b).
	\end{itemize}

		From the information-theoretic perspective, these two improvements in the hidden module and output module could increase the recurrent feed forward rate $D^{R}$ by properly redesigning the multi-layer structure while keeping the same input feed forward rate $D^{X}$.
		By doing so, I$^2$DRNN is able to characterize multi-scale information, so as to achieve more accurate prediction.
	Note that this information-theoretic analysis will not be affected by the operations in the input module, as the $\mathbf{x}_t$ used for analysis, as shown in Eq.~(\ref{EQU:xt}), is a general notation that represents the output from the input module.

	\vspace{0.2cm}
	\subsubsection{i-CAP: I$^2$DRNN vs. Stacked RNN}
	
	\
	
To demonstrate the advantage of I$^2$DRNN over a stacked RNN, without loss of generality, we specifically compare the capacity of a two-layer I$^2$DRNN and a two-layer stacked RNN. 
We use superscripts $^{*}$ and $'$ to denote the notations in I$^2$DRNN and those in stacked RNN, respectively. Let the input coefficients and recurrent coefficients of the $l$-th ($l = 1, 2$) layer in I$^2$DRNN be $\eta^{*}_l$ and $\lambda^{*}_l$, respectively, and let those in the stacked RNN be $\eta'_l$ and $\lambda'_l$, respectively. 
	In the following, we will show that with the same input information rate $\bar{D}^X$, I$^2$DRNN is able to achieve higher $D^R(\tau)$ than stacked RNN, i.e., $D^{R*}(\tau) > D^R{'}(\tau)$.
		
	In stacked RNN, $\mathbf{o}_t$ is conditionally independent of other variables given the top hidden layer $\mathbf{h}_{t}^{2}$. 
	Let $\eta'_2$ and $\lambda'_2$ be the input and recurrent coefficients of the second layer that satisfy $\frac{dim(h)}{2}\log(1 + \frac{\eta'_2 }{\lambda'_2}) = \bar{D}^X$.
	 Since the information of $\mathbf{x}_t$ flows to $\mathbf{h}_t^2$ through $\mathbf{h}_t^1$, we have $I(\mathbf{h}^1_t; \mathbf{x}_t) \geq I(\mathbf{h}^2_t; \mathbf{x}_t)$. As a result, the second layer has a smaller input information rate and a larger recurrent information rate than the first layer, and thus preserves more long-term memory. In this case, the recurrent information rate of stacked RNN is given as follows:	 
	 \begin{equation}\small
	 D^R{'}(\tau) = \frac{dim(h)}{2}\log(1 + \frac{{2\pi e}(1-\lambda'_2){\lambda'_2}^{\tau}}{(1 - {\lambda'_2}^{\tau})\eta'_2 e^{ 2H_x} }). 
	\end{equation}

	In I$^2$DRNN, a shortcut is provided to transfer the information of $\mathbf{x}_t$ to $\mathbf{o}_t$.
	Let $\eta^{*}_1 = \eta'_2$ and $\lambda^{*}_1 = \lambda'_2$, then we have $D^{X*} = \bar{D}^X$. Assume that $\mathbf{h}^1_j$ contains sufficient information of $\mathbf{x}_j$.
	Moreover, we have $0 \leq \lambda'_2, \lambda^{*}_2 < 1$.
	Let $ \eta^{*}_2 = \eta'_2$ and $\lambda^{*}_2 > \lambda'_2$. Since $\tau$ is a positive integer, when $\tau = 1$, we have: 
		\begin{equation}\label{tau-1}
		\small
	\begin{split}
	D^{R*}(1) & = \frac{dim(h)}{2}\log(1 + \frac{{2\pi e}(1-\lambda^{*}_2){\lambda^{*}_2}}{(1 - {\lambda^{*}_2})\eta^{*}_2 e^{ 2H_x} }) \\
	&> \frac{dim(h)}{2}\log(1 + \frac{{2\pi e}{\lambda'_2}}{\eta'_2 e^{ 2H_x} }) = D^R{'}(1).\\
	\end{split}
	\end{equation} 
	When $\tau > 1$, we have:
	\begin{equation}\label{tau-2}
	\small
	\begin{split}
		D^{R*}(\tau) & = \frac{dim(h)}{2}\log(1 + \frac{{2\pi e}(1-\lambda^{*}_2){\lambda^{*}_2}^{\tau}}{(1 - {\lambda^{*}_2}^{\tau})\eta^{*}_2 e^{ 2H_x} }) \\
		&= \frac{dim(h)}{2}\log(1 + \frac{{2\pi e}}{(\frac{1}{{\lambda^{*}_2}^{\tau}} + \frac{1}{{\lambda^{*}_2}^{\tau-1}} + \cdots + \frac{1}{{\lambda^{*}_2}})\eta^{*}_2 e^{ 2H_x} }) \\
		&> \frac{dim(h)}{2}\log(1 + \frac{{2\pi e}}{(\frac{1}{{\lambda'_2}^{\tau}} + \frac{1}{{\lambda'_2}^{\tau-1}} + \cdots + \frac{1}{{\lambda'_2}})\eta'_2 e^{ 2H_x} }) \\
		& = D^R{'}(\tau).\\
	\end{split}
	\end{equation} 
	Combining Eqs.~(\ref{tau-1})-(\ref{tau-2}), we know that given the same $\bar{D}^X$, I$^2$DRNN is able to achieve higher $D^R(\tau)$ than stacked RNN.
	As a result, I$^2$DRNN can provide more accurate estimation on target variable via capturing more information from input data, with fixed amount of hidden units.

	\subsection{Configuration Determination}
	\label{SEC:Config}
	In order to gain more insight into the i-CAP of our model on various datasets, we analyze the model capacity when learning the data with parametric exponential  information functions.
	More importantly, based on the capacity analysis, we further introduce the concepts of necessary and sufficient configurations for the designed model with respect to the given datasets, and the way to determine them, which may provide some hints during the real-world model deployment.
	
	We first explain how to estimate the capacity of a model with certain number of layers and units. Based on that, we can select the range of model configuration for an acceptable performance. 
	Without loss of generality, we conduct the analysis on a two-layer I$^2$DRNN with $h_1$ hidden units in the first layer and $h_2$ hidden units in the second layer.
	We analyze the capacity to capture MI from an optimization point of view: the first layer is optimized to capture as much useful information as possible, and the second layer is then optimized to capture the information that is not captured by the first layer.
	
	\vspace{0.2cm}
	\subsubsection{Capacity Estimation}
	
	\
	
	\underline{Capacity of the First Layer:}~ Consider the data information function following the parametric form $g(\tau) = a*k^\tau$, where $g(\tau)$ is the mutual information between $\mathbf{y}_t$ and the time-lagged input $\mathbf{x}_{t-\tau}$ with $a \in \Re^{+}$ and $\tau \in [0,1]$. Such parametric function well matches the information curves of typical real-world spatio-temporal datasets with exponential information decay. The capacity of the first layer to capture the previous inputs can be written as:
	\begin{equation}
	\small
	\begin{split}
	&I(h_t^1;X_{[1:t]})  = \sum_{\tau=0}^{t-1} I(h_t^1;X_{t-\tau})\\
	& = \frac{dim(h)}{2} \sum_{\tau=0}^{t-1} \log(1+\frac{{2\pi e} (1-\lambda_1) \eta \lambda_1^\tau}{{2\pi e}\lambda_1^{\tau+1}(1-\lambda_1) + \eta  e^{2H_x}(1 - \lambda_1^\tau )} ) .
	\end{split}
	\end{equation}
	We focus on the long-range behaviors and approximate $\lambda_1$ after model training by maximizing the following $f(\lambda_1)$:
	\begin{equation}
	\small
	f(\lambda_1) = \sum_{\tau=1}^{\infty} \log(1+\frac{{2\pi e} (1-\lambda_1) \eta \lambda_1^\tau}{{2\pi e}\lambda_1^{\tau+1}(1-\lambda_1) + \eta  e^{2H_x}(1 - \lambda_1^\tau )} )  *a k^\tau,
	\label{EQU:Capacity1}
	\end{equation}
	which encourage the curve pattern of information function captured by model being similar to the true distribution.
	
	By doing so, we make the information distribution captured by the first layer approach the true distribution.
	Because $\lambda^\tau \rightarrow 0$ when $\tau \gg 1$, by carrying out the Taylor expansion on $f(\lambda_1)$, we have the following:
	\begin{equation}
	\small
	f(\lambda_1) \approx  \sum_{\tau=1} ^{\infty}\frac{ a {2\pi e} (1-\lambda_1)  \lambda_1^\tau k^\tau }{ e^{2H_x}}    
	=    \frac{a {2\pi e} (1-\lambda_1) \lambda_1  k }{(1-\lambda_1 k) e^{2H_x}}.
	\end{equation}
	
	By taking the derivative of $f(\lambda_1)$ and setting it to $0$, we can find the solution:
	\begin{equation}
	\small
	f'( \lambda_1) = \frac{k(k\lambda_1^2-2\lambda_1+1)}{(1-\lambda_1 k)^2} =0
	\Longrightarrow  \lambda^*_1 = \frac{2-\sqrt{4-4k}}{2k}.
	\end{equation}

	\underline{Capacity of the Second Layer:}~ Similarly, we approximate $\lambda_2$ by optimizing the following $ f(\lambda_2) $:	
	\begin{equation}
	\small
	\begin{split}
	f(\lambda_2) = \sum_{\tau=1} ^{\infty}\log(1+\frac{{2\pi e} (1-\lambda_2) \eta \lambda_2^\tau}{{2\pi e}\lambda_2^{\tau+1}(1-\lambda_2) + \eta e^{2H_x} (  1 - \lambda_2^\tau )} ) & \\ * (ak^\tau- h_1 q^\tau) ,&
	\end{split}
	\end{equation}
	where $q=\lambda^*_1$ and $h_1 q^\tau$ is the information captured by the first layer in lag $\tau$. 
	We make the information distribution captured by the second layer approach the remaining information that has not yet been captured by the first layer.
	Because $\lambda^\tau \rightarrow 0$ when $\tau \gg 1$, by carrying out the Taylor expansion on $f(\lambda_2)$, we have the following: 	
	    \begin{equation}
	\small
	\begin{split}
	f(\lambda_2) \approx  & \sum_{\tau=1}^{\infty} \frac{{2\pi e} (1-\lambda_2) \lambda_2^\tau }{ e^{2H_x}} (ak^\tau - h_1q^\tau) \\
	= & \frac{{2\pi e} (1- {\lambda_2} ) {\lambda_2} ((h_1-a)\lambda_2kq +ak- h_1q) }{  e^{2H_x} (1-k{\lambda_2})(1-q{\lambda_2})} \\
	\approx & \frac{{2\pi e} (1- {\lambda_2} ) {\lambda_2} (ak- h_1q) }{  e^{2H_x} (1-k{\lambda_2})(1-q{\lambda_2})} .
	\end{split}
	\end{equation}

	Note that $a$ and $h_1$ are the information within data in lag $0$ and that captured by the first layer in lag $0$, respectively, and we can assume that $a \approx h_1$.	
	Taking the derivative of $f(\lambda_2)$, we have the following:
	    \begin{equation}
	\small
	f'(\lambda_2) = \frac{(ak-h_1q) }{ \eta_2 e^{2\bar{\eta} H_x}}*\frac{\left( 1- 2\lambda_2  + (k+q+kq)\lambda_2^2   \right)}{\left( (1-k{\lambda}_2)(1-q{\lambda}_2) \right)^2}.
	\end{equation}
		Because $ f'(0) >0 $, $f'(1) < 0$ and $f''(\lambda_2)<0$ for $\lambda_2 \in (0,1)$, we have a maximal in $\lambda_2 \in (0,1)$, and the corresponding $\lambda_2$ can be obtained by solving the equation: $1- 2\lambda_2  + (k+q+kq)\lambda_2^2 = 0$. 
	
	\underline{Overall Capacity:}~
	As $\max(I(h_t^1; x_{t-\tau}), I(h_t^2; x_{t-\tau})) \leq I([h_t^1, h_t^2]; x_{t-\tau})  $, we can obtain the lower bound of the model capacity:
	\begin{equation}
	\small
	C = \sum_{\tau} \max(I(h_t^1; x_{t-\tau}), I(h_t^2; x_{t-\tau})),
	\end{equation}
	The information bottleneck (IB) principle has been widely used to study the insight of deep neural networks 
	(DNN)~\cite{tishby2015deep,goldfeld19a,saxe2018information} and guide the learning of 
	DNN~\cite{Alex2017Deep,belghazi2018mine,li2019specializing}. The IB principle is applicable to various structures, such as the recurrent architectures~\cite{li2019specializing} and the ReLU activation function (Section 5.3 of \cite{belghazi2018mine}).
	According to the IB principle, the neural network learns the compact representation that contains sufficient information about inputs regarding the target variable to enhance the generalization~\cite{Igl2019Generalization}. 
	We can then estimate the i-CAP of the model as:
	\begin{equation}
	\small
	I(\mathbf{y}_t; \mathbf{o}_t) = \sum_{\tau} \min(g(\tau), \max(I(h_t^1; x_{t-\tau}), I(h_t^2; x_{t-\tau}))).
	\end{equation}
	
	\underline{Remarks:}~ In some situations, $X_t$ ($t=1,\cdots,T$) are not completely independent of one another. 	
	In terms of practical calculation, we assume that the RNN is randomly initialized and thus captures only short-range information. While training with back propagation over time, the RNN would gradually learn to capture longer-range information to minimize the empirical error. Therefore, in characterizing the task-specific requirement for capturing short- or long-range information, we apply the chain rule for MI: 
	$ I(h_t;X_{[1:t]}])  = \sum_{\tau=1}^{t} I(h_t;X_{t-\tau}|X_{[t-\tau+1:t]})$.
	In this way, the RNN needs only to capture the conditional MI at lag $\tau$, i.e., $I(h_t;X_{t-\tau}|X_{[t-\tau:t]})$, regardless of whether $X_t$ are independent of one another.
	A similar procedure can be used to estimate the capacity of the higher layers. 
	For example, to estimate the capacity of the third layer,
	let $ f(\lambda_3) = \sum_{\tau=1}^{\infty}\log(1+\frac{{2\pi e} (1-\lambda_3) \eta \lambda_3^\tau}{{2\pi e}\lambda_3^{\tau+1}(1-\lambda_3)+ \eta  e^{2H_x} ( 1 - \lambda_3^\tau )} ) * (ak^\tau- C(h_1,q_1,h_2,q_2, \tau) )$, where $C(h_1,q_1,h_2,q_2, \tau)$ denotes the capacity of the first two layers at time lag $\tau$. 
	We notice that the third layer would contribute to capturing the long-range dependency left out of the second layer. Similar to the procedure for calculating the approximate recurrent rate in the second layer, let $C(h_1,q_1,h_2,q_2, \tau) = h_2 \lambda_2^\tau$, we can obtain the recurrent coefficient in the third layer, $\lambda_3$, by solving the equation: $1- 2\lambda_3 + (k+\lambda_2)\lambda_3^2 - k\lambda_2 \lambda_3^3 = 0$. 
	
	\vspace{0.2cm}
	\subsubsection{Necessary and Sufficient Configurations}
	
	\
	
	Based on the estimated capacity of a specific configuration, when given a dataset, we can determine the model's minimum requirement (referred to as the necessary configuration) and maximum requirement (referred to as the sufficient configuration) to achieve an acceptable learning performance.

	\begin{figure}[!t]
		\centering 
		\setlength\tabcolsep{1.0pt}
		\begin{tabular}[t]{ccc}
			\multicolumn{3}{c}{\includegraphics[width=0.8\linewidth]{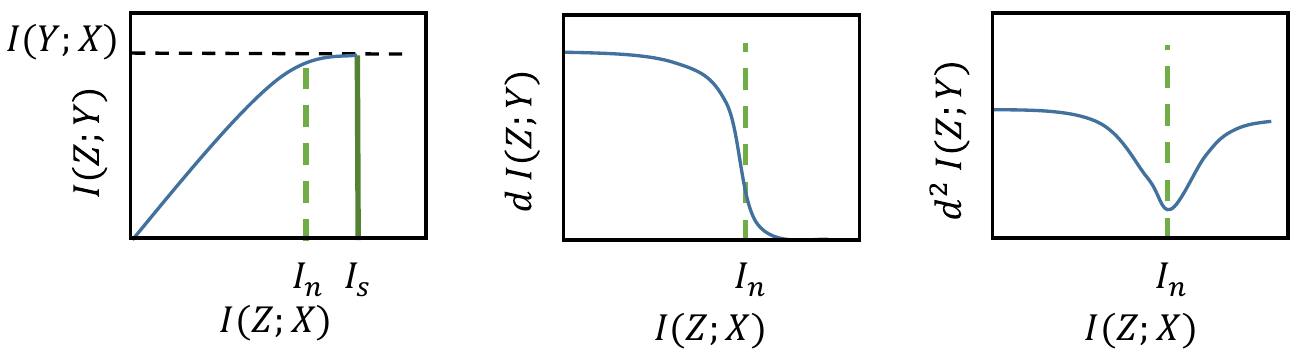}}\\
			\footnotesize  \hspace{1.60cm} (a) &\footnotesize \hspace{2.40cm} (b) &	\footnotesize \hspace{1.45cm} (c) \\
		\end{tabular}
		\caption{Necessary configuration ($I_n$) and sufficient configuration ($I_s$) of the designed model for a given dataset. (a) The $I(Z;Y)$ curve. (b) The first-order derivative of $I(Z;Y)$. (c) The second-order derivative of $I(Z;Y)$. } \label{Figure:configuration}
		\vspace{-0.2cm}
	\end{figure}

	According to the information bottleneck theory, the deep learning model is attempting to make an accurate prediction by learning the representation in as compact a manner as possible:
	$\arg_{p(Z|X)} \max I(Z;Y) - \beta I(X;Z)$,
	where $Z$ is the learned representation for prediction and $\beta$ is the Lagrange multiplier introduced to balance the complexity of the learned representation and the amount of the extracted useful information~\cite{tishby2015deep}.
	When the model is under-parametric, that is, when $I(Z;Y)$ is low, it cannot capture enough information for prediction, and the error rate is bounded by the conditional entropy, $H(Y|Z) = H(Y) - I(Z;Y)$.
	When the model is overparameterized, it may suffer from the over-fitting problem. 
	Therefore, it is important to achieve an appropriate balance between the representation complexity and the learning capacity.

	In our work, instead of tuning the trade-off parameter $\beta$ to achieve such a balance, we control the complexity of the learned representation while ensuring the learning capacity by configuring a proper number of hidden units for the model.
	Although it is difficult to determine in advance the exact best configuration for the model in terms of test data, we can, from the information-theoretic perspective, give a suitable range of model configurations. On the one hand, we can define the necessary configuration $I_n$ as follows:
	\begin{definition}
		The {\bf \textit{necessary configuration}}, $I_n$, of a model with respect to a given dataset is the configuration that minimizes the second-order derivative of $I(Z;Y)$.
	\end{definition}
	As clearly illustrated in the $I(Z;Y)$ curve of Fig.~\ref{Figure:configuration}(a), the information captured by the developed model increases steadily at a decreasing rate before reaching the turning point on the first-order derivative of the $I(Z;Y)$ demonstrated in Fig.~\ref{Figure:configuration}(b), which corresponds to the minimal point of the second-order derivative of $I(Z;Y)$ shown in Fig.~\ref{Figure:configuration}(c).
	Before such a turning point is reached, the rate of increase in the information captured is rather steady, and thus increasing the size of the configuration improves the model's learning capacity. Beyond the turning point marked by the vertical dashed line, the increase in information captured quickly diminishes to approach zero, which implies that it may not be as efficient to further increase the size of the model configuration.

	On the other hand, we can define the sufficient configuration $I_s$ as follows:
	\begin{definition}
		The {\bf \textit{sufficient configuration}}, $I_s$, of a model with respect to a given dataset is the minimal configuration that satisfies $I(Z;Y) =  I(Y;X)$.
	\end{definition}
	An illustrative example of $I_s$ is shown by the vertical solid line in Fig.~\ref{Figure:configuration}(a). 
	At this point, the model can capture all available information to predict $Y$, and increasing the size of the configuration no longer provides additional information.
	
	\section{Systematic Validation}
	\label{SEC:SYE}
	In this section, we conduct a series of experiments on both synthetic datasets and real-world PSTA tasks to validate the capacity of the proposed model to learn multi-scale spatio-temporal dependency and its learning behavior characterized by the information-theoretic framework.

	\begin{figure}[!t]
		\centering
		\setlength\tabcolsep{1.0pt}
		\begin{tabular}[t]{ccc}
			\multicolumn{3}{c}{\includegraphics[width=0.98\linewidth]{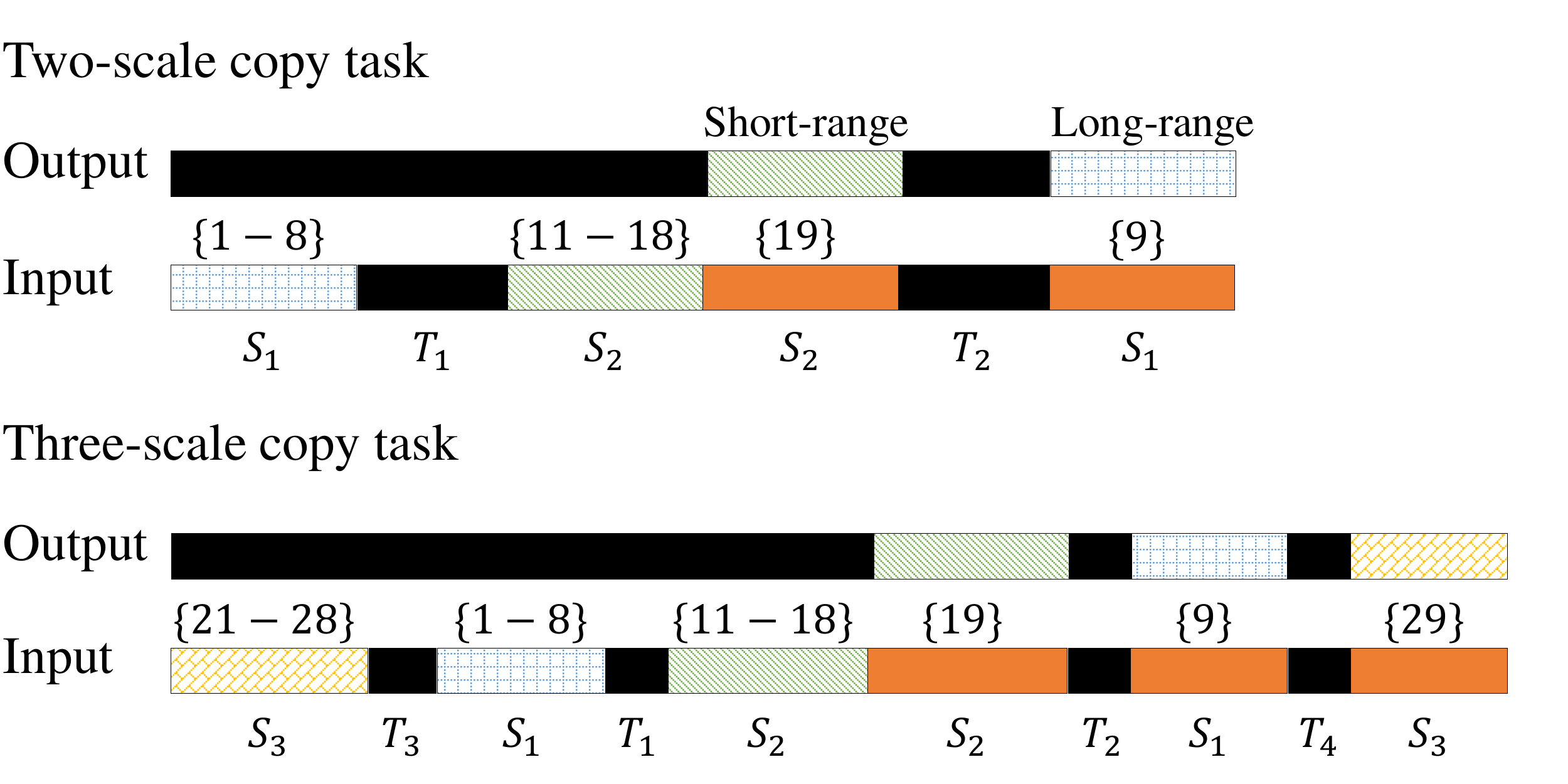} \label{FIG:msct}} \vspace{-0.1cm}\\
			\multicolumn{3}{c}{\footnotesize \hspace{0.25cm} (a)} \vspace{0.25cm}\\
			\includegraphics[width=0.315\linewidth]{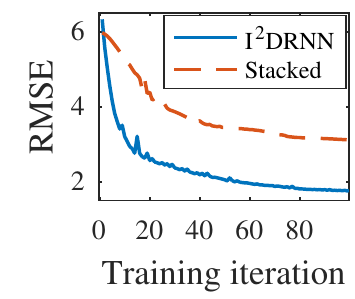} \label{FIG:SYNCurve} &
			\includegraphics[width=0.313\linewidth]{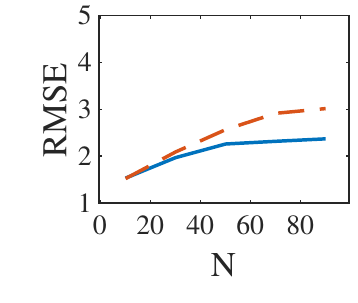} \label{FIG:Dim} &
			\includegraphics[width=0.313\linewidth]{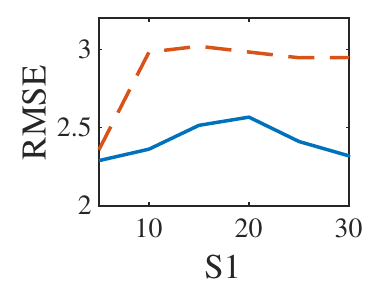}  \label{FIG:S1} \vspace{-0.1cm}\\
			\footnotesize  \hspace{0.5cm} (b) &\footnotesize \hspace{0.35cm} (c) &	\footnotesize \hspace{0.35cm} (d) \vspace{0.25cm}\\		
			\includegraphics[width=0.315\linewidth]{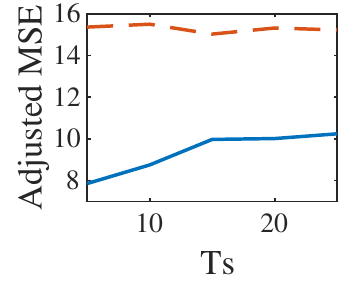}  \label{FIG:T} &
			\includegraphics[width=0.313\linewidth]{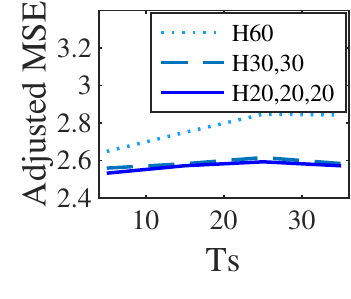} \label{FIG:HScale2}&
			\includegraphics[width=0.313\linewidth]{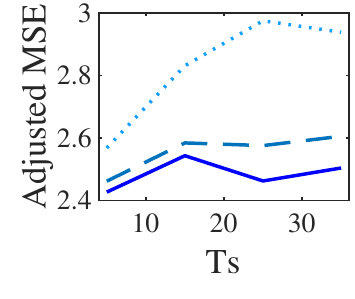} \label{FIG:HScale3} \vspace{-0.1cm} \\
			\footnotesize  \hspace{0.5cm} (e) &\footnotesize \hspace{0.35cm} (f) &	\footnotesize \hspace{0.35cm} (g) \\
		\end{tabular}
		\caption{Performance of I$^2$DRNN on synthetic datasets. 
			(a) Illustration of settings of the multi-scale copy task.
			(b) Performances of I$^2$DRNN and stacked RNN with fixed parameter setting on the two-scale copy task.
			(c)(d)(e) Performances of I$^2$DRNN and stacked RNN with varying $N$, $S1$ and $T_s$ on the two-scale copy task.
			(f) Performance of I$^2$DRNN with different configurations on the two-scale copy task.
			(g) Performance of I$^2$DRNN with different configurations on the three-scale copy task.} \label{FIG:Synthetic}
		\vspace{-0.0cm}
	\end{figure}

	\begin{table}[!t]
		\renewcommand{\arraystretch}{1.3} 
		\footnotesize  
		\caption{Overview of the datasets used in three real-world PSTA tasks: I) Disease Prediction, II) Climate Forecast, and III) Traffic Prediction. All tasks include data from heterogeneous sources with different spatial and temporal scales. The target variables are in bold face.}        	
		\label{TAB:dataset}	
		\centering	
		\vspace{-.3cm}
		
		\begin{tabular}[t]{|p{1.5cm}|p{2.0cm}|p{3.0cm}|p{2.0cm}|p{1.2cm}|p{1.0cm}|}
			
			\hline
			
			\hline
			
			\hline
			
			PSTA Tasks & Data Sources & Attributes & \specialcell{$\#$ of Spatial Points } & Time Interval &   \specialcell{Duration}\\
			
			\hline
			
			\hline
			
			\hline
			
			\multirow{3}{1.5cm}{I) {Disease\\ Prediction}} & \textbf{TYCHO} \textbf{Scarlet Fever} &  Number of Infection  & $59$ (states/territories) & $1$ week& \multirow{3}{1.2cm}{$24$ years} \\  
			
			\cline{2-5}
			
			& Climate & \specialcelll{Temperature, Precipitation} & $48$ (states) & $1$ month& \\
			
			\cline{2-5}
			
			& Covariate Risk Diseases & \specialcelll{ Influenza, Measles, Polio} & $59$ (states/territories) & $1$ week& \\
			\hline
			
			\hline
			
			\hline

			\multirow{4}{1.5cm}{II) {Climate \\Forecast}} & \textbf{NARR} & Air Temperature 2m & $100$ (grids)  & $1$ week& \multirow{4}{1.2cm}{$38$ years} \\
			
			\cline{2-5}
			
			& \multirow{2}{1.8cm}{NCEP}  & cpr., dlrf., dsrf., Precipitation rate, Temp max, Temp min, Pressure & \multirow{2}{1.4cm}{$390$ (grids)} &\multirow{2}{1.2cm}{$1$ month}& \\
			
			\cline{2-5}
			
			& USHCN & Min Temp, Max Temp, Precipitation &$100$ (stations)  & $1$ day& \\
			
			\hline
			
			\hline
			
			\hline
			
			\multirow{3}{1.5cm}{III) {Traffic \\Prediction}} & \textbf{Traffic} & Traffic Jam Index &  $68$ (sections) &$10$ mins& \multirow{3}{1.2cm}{$1$ month} \\
			
			\cline{2-5}
			
			& Weather & Rainfall &  \specialcelll{$11$ (districts)} & $3$ hours& \\
			
			\cline{2-5}
			
			& Air Quality & AQI  & $10$ (sites) & $1$ hour& \\
			\hline
			
			\hline  
			
			\hline
		\end{tabular}                   
		
	\end{table}

	\subsection{Learning Performance Evaluation}\label{SEC:LPE}
	First, we evaluate the performance of the proposed model by comparing it with other classical and state-of-the-art models, including the time series models, tensor-based learning models, deep neural network models, and RNN models, on both synthetic datasets with multi-scale dependency and real-world PSTA tasks with heterogeneous data sources.

	\vspace{0.2cm}
	\subsubsection{Synthetic Datasets}\label{SEC:SD}
	
	We design the \emph{Multi-scale Copy Task}, which includes data dependency at different scales, to validate the performance of I$^2$DRNN in capturing multi-scale dependency.
	The top of Fig.~\ref{FIG:Synthetic}(a) illustrates the setting of the two-scale copy task. 
	In this task, the input sequence consists of the following segments: 
	\begin{itemize}
		\item A segment of $S_1$ entries chosen randomly from the values of $1, \ldots, 8$; 
		\item A segment of $T_1$ entries with the value of $0$; 
		\item A segment of $S_2$ entries chosen randomly from the values of $11, \ldots, 18$; 
		\item A segment of $S_2$ entries with the value of $19$; 
		\item A segment of $T_2$ entries with the value of 0; and 
		\item A segment of $S_1$ entries with the value of $9$. 
	\end{itemize}
	Each output sequence is the same length as its corresponding input sequence and consists of the following segments:
	\begin{itemize}
		\item A segment of $S_1 + T_1 + S_2$ entries with the value of zero;
		\item A segment of $S_2$ entries the same as the third segment in the input sequence;
		\item A segment of $T_2$ entries with the value of zero; and 
		\item A segment of $S_1$ entries the same as the first segment in the input sequence. 
	\end{itemize}
	Here $S_1$ indicates the amount of information that is long-range correlated, and $T_1$ and $T_2$ determine the distance to transfer the long-range information.
	We compare the performance of I$^2$DRNN and the stacked RNN in predicting the output sequence, in terms of the root mean square error (RMSE): 
	$RMSE = \sqrt{\frac{\sum_{t=1}^{T^{test}}||\hat{\mathbf{y}}_t -\mathbf{y}_t||_2^2}{N*T^{test}}}$, 
	where $\hat{\mathbf{y}}_t$ and $\mathbf{y}_t$ are the predicted value and the ground truth at time step $t$ of the output sequence, respectively, $N$ is the number of output sequences in one sample, and $T^{test}$ is the length of the output sequences.
	The I$^2$DRNN and the stacked RNN both consist of two layers with 10 hidden units in each layer. 
	For each setting, we randomly generate 200 samples, among which $70\%$ are used for training and $30\%$ are used for testing. 
	Fig.~\ref{FIG:Synthetic}(b) shows the performance of I$^2$DRNN and the stacked RNN with settings of $N = 80, S_1= S_2 = 10, T_1 = T_2 = T_s = 15$. We can observe that I$^2$DRNN achieves a lower RMSE than the stacked RNN.
	To further evaluate the robustness of I$^2$DRNN, we vary the parameters $N, S_1, T_s$: $N = 10, 30, 50, 70, 90$, $S_1 = 5, 10, 15, 20, 25, 30$, and $T_s = 5, 10, 15, 20, 25$.
	When varying one parameter, we fix the others.
	Figs.~\ref{FIG:Synthetic}(c), \ref{FIG:Synthetic}(d) and \ref{FIG:Synthetic}(e) demonstrate the performance of I$^2$DRNN and the stacked RNN with various values of $N$, $S_1$, and $T_s$, respectively.
	The proposed model consistently outperforms the stacked RNN on various settings in terms of the prediction accuracy, thus validating the robustness of its superiority.
	Note that when $T_s$ increases, more $0$s are involved in the data sequences. To remove the impact in performance evaluation brought by these $0$s, we use the adjusted MSE to replace RMSE in the evaluation when varying the parameter $T_s$:  
	$Adjusted \, MSE = \frac{\sum_{t=1}^{T^{test}}||\hat{\mathbf{y}}_t -\mathbf{y}_t||_2^2} {N*T^{test} (S_1+S_2)/(S_1+S_2+Ts)}$.

	\vspace{0.1cm}
	Furthermore, we evaluate the effect of increasing the number of layers to capture the multi-scale dependency. 
	In the aforementioned two-scale copy memory task, there are two scales of dependency to be captured: a short-range one (between two $S_2$s) and a long-range one (between two $S_1$s).
	We make the task more challenging by increasing one more scale: the bottom of Fig.~\ref{FIG:Synthetic}(a) illustrates the setting of a three-scale copy memory task, presenting an additional scale of extremely long-range dependency (between two $S_3$s). 
	We evaluate the performance of I$^2$DRNN with three configurations: one layer with 60 hidden units (referred to as $H60$); two layers with 30 hidden units in each layer (referred to as $H30,30$); and three layers with 20 hidden units in each layer (referred to as $H20,20,20$).
	
	Figs.~\ref{FIG:Synthetic}(f) and~\ref{FIG:Synthetic}(g) show the performance of I$^2$DRNN with various configurations on the two-scale and three-scale copy memory tasks, respectively. 
	Generally, more layers give better performance. 
	However, as shown in Fig.~\ref{FIG:Synthetic}(f), the performance of $H30,30$ is comparable with that of $H20,20,20$, possibly because the two-layer I$^2$DRNN is already sufficient to capture the short-range and long-range dependency in the two-scale copy memory task.
	For the three-scale copy memory task, because one more scale of extremely long-range dependency is included, the I$^2$DRNN with three-layers can learn the additional scale of dependency, which is not fully captured by the two-layer structure, and thus $H20,20,20$ further outperforms $H30,30$ in this task, as shown in Fig.~\ref{FIG:Synthetic}(g).

	\vspace{0.2cm}
	\subsubsection{Real-World PSTA Tasks}\label{SEC:PSTA}
	
	\
	
	We evaluate the performance of the proposed model on three representative real-world PSTA tasks: 
	I) disease prediction, II) climate forecast, and III) traffic prediction. 
	
	\vspace{0.1cm}
	\underline{Dataset Description:}~ 
	\label{SEC:DD}	
	Table~\ref{TAB:dataset} summarizes the statistics of datasets used for the three PSTA tasks.  
	The dataset of each task contains heterogeneous data sources with different spatial and temporal scales.
	\begin{itemize}	
		\item Disease prediction: We use the state-wide scarlet fever data from the TYCHO dataset~\cite{tycho,matsubara2014funnel}, which includes weekly surveillance reports from the United States collected from 1928 to 1951. The scarlet fever dataset includes $59$ spatial regions and $1252$ time points. 
		Empirical studies have revealed the influence of climate conditions~\cite{brownstein2003climate} and co-evolving diseases on the outbreak of scarlet fever. 
		The monthly historical record of climate indices is collected by the National Centers for Environmental Information (\url{https://www.ncdc.noaa.gov/data-access/quick-links}) from 1895 to 2017. 
		
		\item Climate forecast: We use the climate dataset collected in the United States, which includes the air temperature data at the $2$-meter level from the North American Regional Reanalysis (NARR)  (\url{https://www.esrl.noaa.gov/psd/data/gridded/data.narr.monolevel.html}), the data about 7 climate variables from National Centers for Environmental Prediction (\url{https://www.esrl.noaa.gov/gmd/dv/data}), and the data about 3 climate variables from the United States Historical Climatology Network (USHCN) (\url{https://cdiac.ess-dive.lbl.gov/ftp/ushcn_daily}). We select the data during 1980-2017 ($38$ years in total) and in the spatial region of the United States. For the NARR temperature data, we average the original values into the weekly level and uniformly select 100 grids as the forecast target.
		
		\item Traffic prediction: We use an urban dataset collected in Shanghai that includes weather conditions, air quality indices and traffic jam indices for April 2015. 
		The weather data, air quality index data, and traffic jam index data were collected by the Shanghai Meteorological Bureau, the Shanghai Environmental Protection Bureau, and the Shanghai Urban and Rural Construction and Traffic Development Academy, respectively. 
		The dataset was released by the organizing committee of the Shanghai Open Data Apps (Season information technology Co. Ltd. Shanghai open data apps (2015). \url{http://soda.datashanghai.gov.cn/}). 
		We aim to predict the traffic index, which contains a total of $2160$ time points and $68$ spatial regions. 
	\end{itemize}

\begin{table*}[!t]
	\renewcommand{\arraystretch}{1.3}
	\footnotesize
	\caption{Comparison of the proposed model (I$^2$DRNN) and six representative models (GP~\protect\cite{rasmussen2006gaussian},  LSTM~\protect\cite{hochreiter1997long}, FS-RNN~\cite{mujika2017fast}, LRTL~\protect\cite{bahadori2014fast}, ST-ResNet~\protect\cite{zhang2018predicting}, and DA-RNN~\protect\cite{qin2017dual}) on three real-world PSTA tasks: I) Disease Prediction, II) Climate Forecast, and III) Traffic Prediction. The best performances among the seven models on different tasks are highlighted in bold face.}
	\label{TAB:Result}
\hspace{-2cm}
	\begin{tabular}{|m{1.8cm}|p{0.85cm}|p{0.9cm}|p{1.8cm}|p{1.6cm}|p{1.25cm}|p{1.9cm}|p{1.9cm}|l|}
		\hline
		
		\hline
		
		\hline
		\multirow{2}{*}{PSTA Tasks} &		\multirow{2}{*}{Criteria}	  & \multicolumn{7}{c|}{Methods} \\ 
		\cline{3-9}
		&      														& GP \cite{rasmussen2006gaussian}     & LSTM \cite{hochreiter1997long}            &   FS-RNN~\cite{mujika2017fast}  & LRTL \cite{bahadori2014fast}		& ST-ResNet \cite{zhang2018predicting}   & DA-RNN \cite{qin2017dual}   			&I$^2$DRNN                                                                               \\ 
		\hline
		
		\hline
		
		\hline
		\multirow{2}{2cm}{I) {Disease\\ Prediction}} & RMSE & $83.66$            						&  \specialcel{$38.47\pm	0.96$}			&		\specialcel{$51.54 \pm	0.85$}		& $52.89$           									& \specialcel{$50.20\pm0.92$     }   					& \specialcel{$42.66 \pm3.90$ 	} 								& \specialcel{ $\mathbf{37.13\pm 1.19}$   }  \\
		\cline{2-9}
		& MAE  & $51.17$            			 			& \specialcel{$25.54\pm1.04$}						&	\specialcel{$36.86 \pm	0.10$}	& $29.86$           									& \specialcel{$27.46 \pm0.54$                  }        & \specialcel{$32.27 \pm1.95$    			} 						&\specialcel{ $\mathbf{22.69\pm 1.04}$      }                               \\  
		\hline
		
		\hline
		
		\hline

		\multirow{2}{2cm}{II) {Climate \\Forecast}}          & RMSE &$ 2.81$       										& \specialcel{$8.83\pm 0.0007$ 	}	& \specialcel{$9.22 \pm	0.075$}	& $5.13$           												&  \specialcel{$4.61 \pm 0.13$			}				& \specialcel{$8.22 \pm 0.00008$ 	} 	& \specialcel{$\mathbf{2.71 \pm 0.07}$  }  \\
		\cline{2-9}
		& MAE  & $2.06$           										& \specialcel{$6.85\pm 0.0001$ 		}	& \specialcel{$6.96 \pm	0.055$}	& $4.19$         											& \specialcel{$3.691 \pm 0.11$ 	}              & \specialcel{$6.66 \pm 0.00005$  } 		&  \specialcel{ $\mathbf{2.00 \pm 0.06}$}  \\
		\hline
		
		\hline
		
		\hline
		\multirow{2}{2cm}{III) {Traffic \\Prediction}}            & RMSE & $6.38$            				   					&\specialcel{$5.30\pm0.12$   		}						&	\specialcel{$5.96 \pm	0.068$}	& $5.88$            											& \specialcel{$5.76 \pm 0.11$     }                       &\specialcel{ $5.15 \pm 0.08$ 	} 								& \specialcel{ $\mathbf{5.08\pm0.01}$   }\\
		\cline{2-9}
		& MAE  & $4.44$           				  						& \specialcel{$3.70\pm0.07$  					}				& 	\specialcel{$4.53\pm	0.086$}	 & $4.46$            											& \specialcel{$3.94 \pm0.13$       	}					 &  \specialcel{$3.69 \pm0.07$ 		} 							& \specialcel{ $\mathbf{3.65\pm 0.004}$  }     \\
		\hline
		
		\hline
		
		\hline
	\end{tabular}

\end{table*}

	\underline{Experimental Settings:}~
	We compare the proposed I$^2$DRNN with six representative models: Gaussian Process (GP) model~\cite{rasmussen2006gaussian}, Long Short-Term Memory (LSTM) network~\cite{hochreiter1997long}, Fast-Slow Recurrent Neural Network (FS-RNN)~\cite{mujika2017fast}, Low-Rank Tensor Learning ({LRTL})~\cite{bahadori2014fast}, Spatio-Temporal Residual Network ({ST-ResNet}) \cite{zhang2018predicting}, and Dual-Stage Attention RNN ({DA-RNN})~\cite{qin2017dual}.
	We implement our model using the Pytorch~\cite{paszke2019pytorch}. We use vanilla RNN cell on the disease dataset and LSTM cell~\cite{hochreiter1997long} on the traffic and climate dataset, since the disease dataset has smaller sample size.
	We train our model using Adam optimizer~\cite{ADAM} with backpropagation through time (BPTT)~\cite{werbos1988generalization} with the step size of $0.001$.
	The size of hidden units in each layer is $90$ and the number of layers is $3$.
	The size of hidden units in LSTM, FS-RNN, ST-ResNet, and DA-RNN is the same as that in our model.
	For each dataset, we normalize the data to the range of $[0,1]$.
	We use the data in the first $64\%$ of time points for training, the following $16\%$ for validation, and the final $20\%$ for testing.
	We use two standard criteria for performance evaluation: the RMSE and the mean absolute error: $MAE = \frac{\sum_{t=1}^{T^{test}}|\hat{\mathbf{y}}_t -\mathbf{y}_t|}{N*T^{test}}$~\cite{chen2016learning}.
	The GP model is learned by a stable deterministic algorithm implemented in the Scikit package~\cite{scikit-learn}, and the results of the best kernel combination (RBF and dot-product kernels) are reported. 
	For disease prediction task, we use PCA to reduce the dimension of external features while preserving $99.9\%$ of the energy. 
	For LSTM, FS-RNN, ST-ResNet, DA-RNN, and I$^2$DRNN, we repeat the experiment 10 times with random initializations of the neural networks and report the average result.
	
	\underline{Results:}~ 
	Table~\ref{TAB:Result} shows the performances of the aforementioned seven methods on the three PSTA tasks. 
	As can be seen, I$^2$DRNN performs the best among all models.
	Specifically, I$^2$DRNN outperforms LRTL because LRTL uses the linear autoregressive model to capture the temporal dependency, while I$^2$DRNN uses the recurrent structure, which has a more powerful representation capacity.
	Compared with ST-ResNet, I$^2$DRNN integrates all available information rather than predefining a few attributes to be fed into the model, and it thus captures the multi-scale dependency of spatio-temporal data in a more comprehensive way.
	By incorporating the all-hidden-output connections and the feedback structure, I$^2$DRNN further improves the capacity of spatio-temporal dependency learning and thus performs better than LSTM, FS-RNN, and DA-RNN.

	\subsection{Analysis and Interpretation}
	In addition to the quantitative evaluation on the performance of I$^2$DRNN, we further analyze its learning behavior and interpret its learning results in the real-world context.
	
	\vspace{0.2cm}
	\subsubsection{Analysis of I$^2$DRNN's Learning Behavior}
	
	\
	
	The MI between hidden layers and the input layer on three PSTA tasks is shown in Fig.~\ref{FIG:Layer_INFO}. 
	The MI between the first layer and the input layer is high in small time lags and decays quickly as the time lag increases, indicating that the lower layer tends to capture short-range dependency. 
	The MI between the second/third layers of I$^2$DRNN and the input layer is relatively high when the time lag is large, indicating that the higher layer tends to capture long-range dependency.

	\begin{figure}[!t]
		\centering
		\setlength\tabcolsep{1.0pt}
		\begin{tabular}[t]{ccc}
			\hspace{-0.18cm} \includegraphics[width=0.31\linewidth]{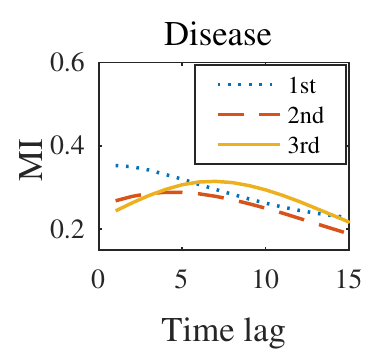} &
			\hspace{0.16cm} \includegraphics[width=0.31\linewidth]{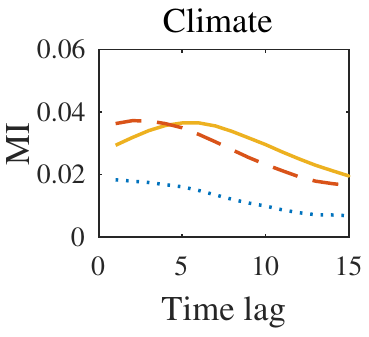} &
			\hspace{0.01cm} \includegraphics[width=0.31\linewidth]{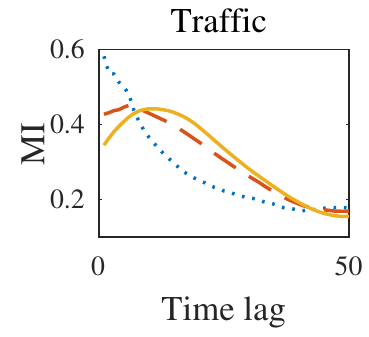} \\
			\footnotesize  \hspace{0.45cm} (a) &\footnotesize \hspace{0.70cm} (b) &	\footnotesize \hspace{0.55cm} (c) \\
		\end{tabular}
		\caption{MI between hidden layers and the input layer in various time lags, $I(h^l_t; X_{t-lag})$, on (a) disease prediction, (b) climate forecast, and (c) traffic prediction. The lower layers tend to capture the shorter dependency, while the upper layers tend to capture longer and coarser dependency.} \label{FIG:Layer_INFO}
		\vspace{-0.3cm}
	\end{figure}

	\begin{figure*}[!t]
		\centering
		\subfigure{\centerline{\includegraphics[width= 1.2\linewidth]{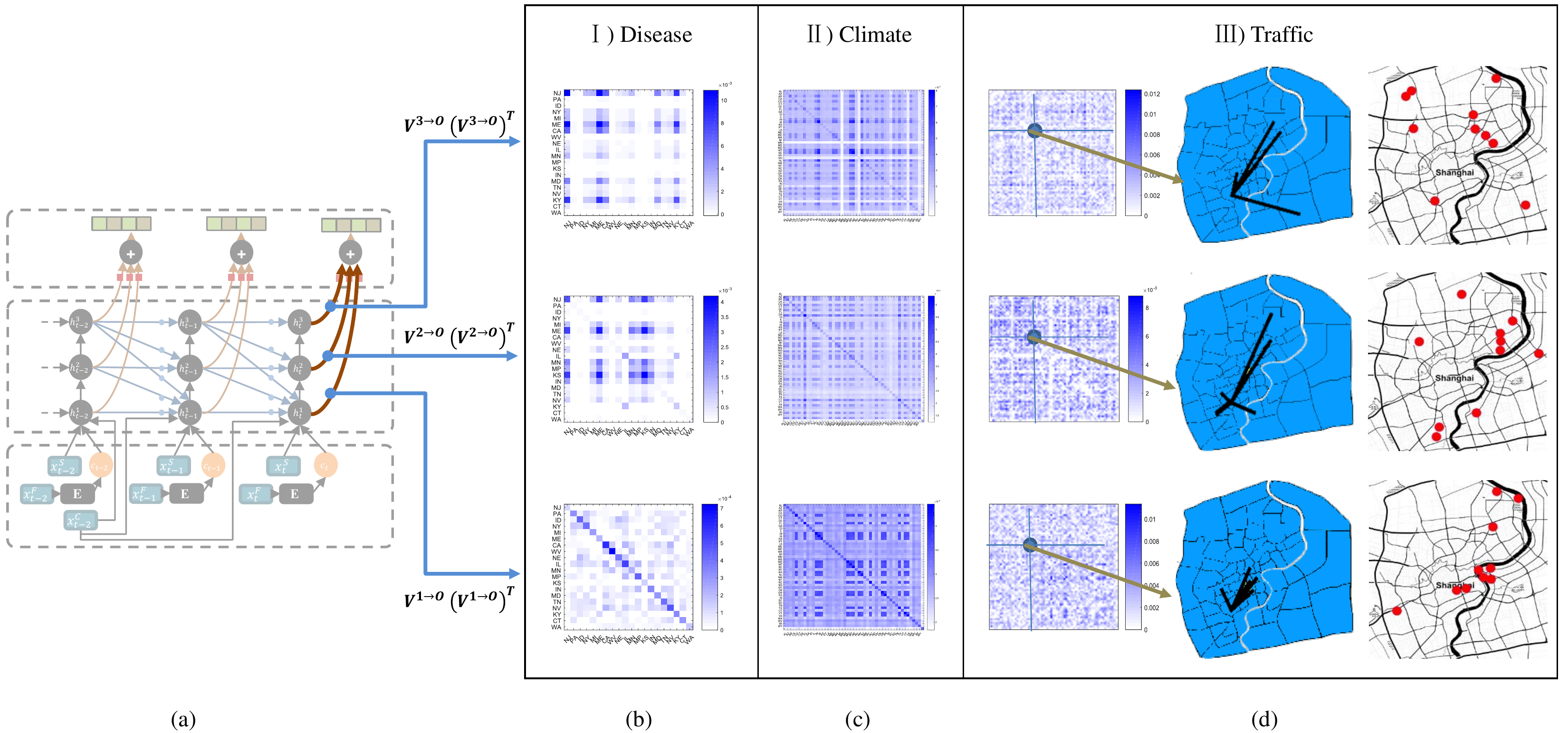}}}
		\vspace{-0.15cm}
		\caption{Multi-scale dependency among various regions on three real-world tasks. 
			(a) Correlations at various scales are calculated using the output weights of our model in different layers. 
			(b) Correlation matrices at various scales on the disease prediction task. 
			(c) Correlation matrices at various scales on the climate forecast task. 
			(d) Left column: correlation matrices at various scales on traffic prediction task; 
			middle column: top 5 correlated regions to Xujiahui district at various scales;
			right column: spatial distributions of dominant attributes to shape spatial correlations at various scales. Ten locations for the most dominant attribute with the largest POI counts in each scale are marked as red points.
			}
		\vspace{-0.20cm}
		\label{FIG:Covariance}
	\end{figure*}

	In the spatial context, we calculate the correlations among various locations at various scales. 
	As shown in Eq.~(\ref{EQU:OUTPUT}), the hidden units can be regarded as the hidden common factors of the output variables. 
	Fig.~\ref{FIG:Covariance}(a) recalls the architecture of the proposed I$^2$DRNN.
	Accordingly, we can calculate the correlations among various locations that arise from these common factors at the $l$-th scale ($l = 1, 2, 3$) using the output weights of I$^2$DRNN in the $l$-th layer: $\mathbf{Cov}^{l} =\mathbf{V}^{l\rightarrow O} {\mathbf{V}^{l\rightarrow O}}^{T}$. 
	Fig.~\ref{FIG:Covariance}(b) shows the correlations among the top 20 states in United States in terms of the number of scarlet fever cases, and Fig.~\ref{FIG:Covariance}(c) shows the correlations among the various states in terms of temperature. 
	The dynamic processes over these regions are self-correlated at the fine scale and that they tend to form some block-wise correlations at the coarser scales. 
	
	\vspace{0.2cm}
	\subsubsection{Interpretation of I$^2$DRNN's Learning Results}
	
	\
	
	For illustration purposes, we interpret the learning results of our model for the traffic prediction task in Fig.~\ref{FIG:Covariance}(d), as an example.
	First, similar to the results for the disease and climate tasks shown in Figs.~\ref{FIG:Covariance}(b) and~\ref{FIG:Covariance}(c), the traffic jam indices among various regions on the traffic prediction task also demonstrate the fine-scale correlations in the bottom layer and coarser-scale correlations in the higher layers, as shown in the left column of Fig.~\ref{FIG:Covariance}(d).
	Moreover, we show the top 5 correlated regions to Xujiahui district on the traffic prediction task in the middle column of Fig.~\ref{FIG:Covariance}(d).
	The correlated regions at the low level are near Xujiahui, while those at the higher levels are relatively distant. 
	This result indicates that our model can learn the spatio-temporal dependency at varying scales by using different layers, which helps to make predictions.
	
	To further mine the underlying mechanism that generates such traffic patterns, we collect the point-of-interest (POI) data in each region and count the number of POIs in 19 categories defined by the Baidu Map API SDK: food, hotel, shopping place, life service, beauty, tourist attraction, entertainment, sport, education, culture, medical service, car service, transportation, finance, real estate, corporation, government, doorway and natural features. 
	We then normalize the counts along each category and use Isometric Projection \cite{IsoPro} to identify the attributes that make the greatest contribution to shaping these traffic patterns.
	Isometric Projection learns a linear projection $W\in \Re^{19 \times ls}$ from the POI feature space to an $ls$-dimensional space where the distances between locations are similar to those in the left column of Fig.~\ref{FIG:Covariance}(d). 
	Finally, the summation of the weights for each category could be regarded as the importance of this category in shaping the spatial correlation in the corresponding scale. 
	The ten locations of the attributes that make the greatest contribution to the largest POI counts are shown as red points in the right column of Fig.~\ref{FIG:Covariance}(d).
	
	The discovered scales display a strong resemblance to the real-world context and conform to observations of the physical world. 
	We notice that the attribute that makes the greatest contribution to the first scale (the bottom hidden layer) is {\em tourist attraction}, that to the second scale (the middle hidden layer) is {\em real estate}, and that to the third scale (the top hidden layer) is {\em shopping place}. Tourist attractions are usually positioned near one another at central locations within a city, so they can be reached conveniently by many people and are much more densely populated than real estate districts, which can be seen across the city in various regions. 
	Yet, when compared to shopping places, which are widely distributed on every street corner in the city, real estate districts are more compact than the shopping places.
	
	\subsection{Necessary and Sufficient Configurations}
	We validate the necessary and sufficient configurations of our model as derived from the information-theoretic analysis on both synthetic datasets and real-world PSTA tasks.	
	
	\begin{table}[!t]
		\centering
		\caption{Necessary, sufficient, and the best configurations of I$^2$DRNN with different $D$s in Fractional ARIMA datasets.}
		\renewcommand{\arraystretch}{1.3}
		\begin{tabular}{|c|c|c|c|}
			\hline
			
			\hline
			
			\hline
			D         & 20  & 40   & 60   \\ 
			\hline
			
			\hline
			
			\hline
			Necessary Configuration       & 160 & 320  & 500  \\ \hline
			Sufficient Configuration       & 500 & 1000 & 1400 \\ \hline
			The Best Configuration & 280 & 560  & 1000 \\ 
			\hline
			
			\hline
			
			\hline
		\end{tabular}
		\vspace{-0.2cm}
		\label{TAB:SynHHRS}
	\end{table}

	\vspace{0.1cm}
	\subsubsection{Synthetic Datasets}
	
	\
	
	We use Fractional ARIMA($p,d,q$) model to generate $D$ long-range dependent time series. 
	We set $p = 0.9, d = 0.1, q = 0$ and $D = 20, 40, 60$ in our experiment.
	Obviously, larger $D$ indicates greater complexity of the dependency.
	For each $D$, we measure the time-lagged MI and determine the range of hidden size accordingly. 
	Without loss of generality, we set the number of layers to 2.
	Following the work in~\cite{tishby2015deep}, we use the bin method to estimate the MI. 
	As shown in Table~\ref{TAB:SynHHRS}, the best configurations corresponding to the minimum RMSE always fall within the range between the necessary and sufficient configurations, thus validating the effectiveness of our information-theoretic analysis. \\
	
	\vspace{2cm}
	\subsubsection{Real-World PSTA Tasks}
	
	\
	
	We further evaluate the effectiveness of our information-theoretically derived model configurations on disease prediction, climate forecast, and traffic prediction tasks used in Section~\ref{SEC:PSTA}.
	The numbers of layers of I$^2$DRNN used for these three datasets are 1, 3, and 2, respectively. 
	
	Fig.~\ref{FIG:sizerange} shows the configuration-capacity on the three tasks (the top row), as well as the first- and second-order derivatives of the curve (the middle and bottom rows, respectively).
	The range between the necessary configuration and the sufficient configuration is shaded in green, and the best configuration with respect to the test performance is highlighted with a vertical red line. 
	As with synthetic datasets, the best configurations always fall within the recommended interval for all tasks.
	This fact is consistent with the results of our information-theoretic analysis, which provides a way to answer an open question in deep learning, i.e., how to determine the range of desirable configurations of a certain model for the given datasets.
	
	\begin{figure}[!t]
		\centering
		\setlength\tabcolsep{1.0pt}
		\includegraphics[width=0.8\linewidth]{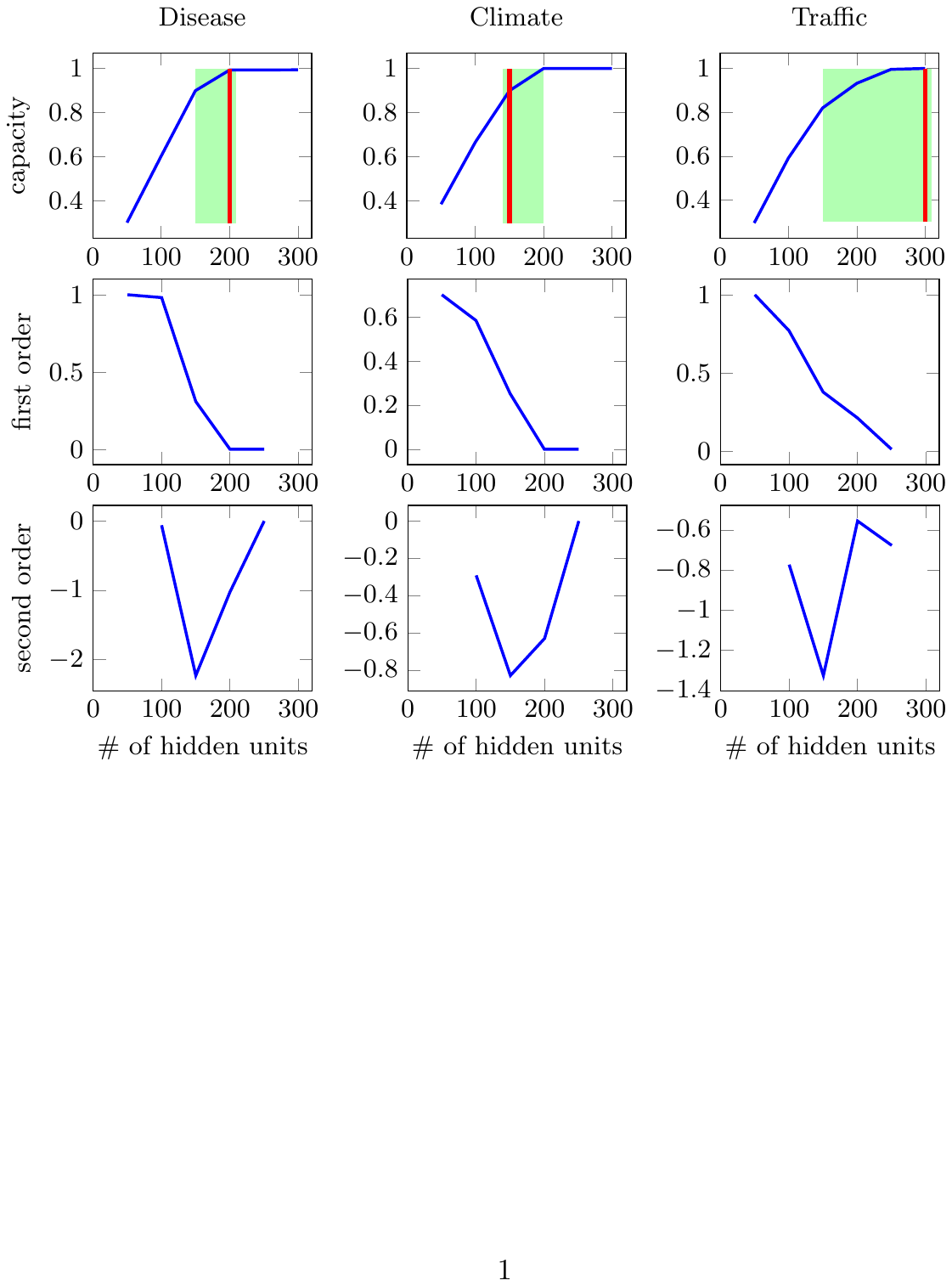} 
		\begin{tabular}[t]{ccc}
			
			\footnotesize  \hspace{1.6cm} (a)\hspace{1.4cm} &\footnotesize \hspace{1.0cm} (b) \hspace{1cm}&	\footnotesize \hspace{1cm} \hspace{1.1cm} (c) \hspace{1cm} \\
		\end{tabular}
		\caption{Necessary, sufficient, and the best configurations of I$^2$DRNN on (a) disease prediction, (b) climate forecast, and (c) traffic prediction tasks. \emph{Top row}: The configuration-capacity curve. The range between the necessary configuration and the sufficient configuration is shaded in green, and the best configuration with respect to the test performance is highlighted as a vertical red line. \emph{Middle row}: First-order derivative of the configuration-capacity curve. \emph{Bottom row}: Second-order derivative of the configuration-capacity curve.} \label{FIG:sizerange}
		\vspace{-0.2cm}
	\end{figure}

	\section{Conclusions}
	\label{SEC:CON}
	In this study, we investigated an important and challenging problem in deep learning for PSTA: Given a learning dataset with multi-scale spatio-temporal dependency, how to theoretically guide the specific design, analytical understanding, and empirical validation of a deep learning model, so that its behaviors and performance can be guaranteed and explained.
	To address this problem, we first presented an I$^2$DRNN model that can incorporate data from heterogeneous sources and use the hierarchical recurrent structure to characterize the complex spatio-temporal dependency at varying scales to make predictions.
	We then introduced an information-theoretic framework to quantitatively characterize the i-CAP of the model and analytically derive the necessary and sufficient configurations of the model with respect to the given datasets.
	Finally, we conducted comprehensive experiments to validate the effectiveness of our model and examine the consistency with our information-theoretic analysis.
	Along this direction, in our future work, we plan to extend the current study to other representative deep architectures and learning tasks.
	

	\bibliographystyle{abbrv}
	\bibliography{arxiv}

\end{document}